\definecolor{light-gray}{gray}{0.9}
\definecolor{darkgreen}{rgb}{0,0.5,0}
\definecolor{darkblue}{rgb}{0.0,0.0,0.5}
\definecolor{darkred}{rgb}{0.15,0.0,0.0}
\numberwithin{equation}{section}
\declaretheorem[name=Theorem, numberwithin=section]{theorem}
\declaretheorem[name=Lemma, numberlike=theorem]{lemma}
\declaretheorem[sibling=theorem]{definition}
\declaretheorem[sibling=theorem]{corollary}
\declaretheorem[ shaded={rulecolor=black, rulewidth=0.5pt, bgcolor=gray!7}, name=Theorem, sibling=theorem]{thmbox}
\declaretheorem[ shaded={rulecolor=black, rulewidth=0.5pt, bgcolor=gray!7}, name=Proposition, sibling=theorem]{propbox}
\setlist{nolistsep}
\setlist{leftmargin=*}
\def\bz{{\mathbf z}}
\def\bx{{\mathbf x}}
\def\be{{\mathbf e}}
\def\bw{{\mathbf w}}
\def\bv{{\mathbf v}}
\def\balpha{{\boldsymbol \alpha}}
\newcommand{\sto}{\mathrm{\textsc{Grad}}}
\newcommand{\norm}[1]{\left\| #1\right\|}
\newcommand{\inp}[2]{ \langle #1,#2\rangle}
\newcommand{\rpar}[1]{\left({#1}\right)}
\newcommand{\spar}[1]{\left[{#1}\right]}
\newcommand{\abs}[1]{\left|{#1}\right|}
\renewcommand{\hat}{\widehat}
\renewcommand{\tilde}{\widetilde}
\newcommand{\variation}{\textsc{Variation}}
\newcommand{\E}{\operatorname{\mathbb{E}}}
\renewcommand{\O}{\operatorname{\mathcal{O}}}
\newcommand{\R}{\mathbb{R}}
\newcommand{\A}{\mathcal{A}}
\def\update{{\boldsymbol \Delta}}
\newcommand{\regret}{R}
\newcommand{\alg}{\textsc{Learner}}
\newcommand{\clip}{\mathrm{clip}}
\newcommand{\by}{\mathbf{y}}
\newcommand{\bg}{\mathbf{g}}
\newcommand{\bu}{\mathbf{u}}
\newcommand{\eps}{\epsilon}
\newcommand{\ind}[1]{\mathbb{I}_{[#1]}} 
\newcommand{\cc}{M_\beta}
\DeclareMathOperator*{\argmin}{arg\,min}
\icmltitlerunning{Understanding Adam Optimizer via Online Learning of Updates}
\begin{document}

\twocolumn[
\icmltitle{Understanding Adam optimizer via Online Learning of Updates:\\
Adam is FTRL in Disguise}



\icmlsetsymbol{equal}{*}

\begin{icmlauthorlist}
\icmlauthor{Kwangjun Ahn}{mit,msr}
\icmlauthor{Zhiyu Zhang}{harvard}
\icmlauthor{Yunbum Kook}{gt}
\icmlauthor{Yan Dai}{tsing} 
\end{icmlauthorlist}

\icmlaffiliation{mit}{MIT}
\icmlaffiliation{harvard}{Havard University}
\icmlaffiliation{gt}{Georgia Tech}
\icmlaffiliation{tsing}{Tsinghua University}
\icmlaffiliation{msr}{Microsoft Research}

\icmlcorrespondingauthor{Kwangjun Ahn}{kjahn@mit.edu} 

\icmlkeywords{Machine Learning, ICML}

\vskip 0.3in
]



\printAffiliationsAndNotice{}  

\begin{abstract}
Despite the success of the Adam optimizer in practice, the theoretical understanding of its algorithmic components still remains limited. In particular, most existing analyses of Adam show the convergence rate that can be simply achieved by non-adative algorithms like SGD. In this work, we provide a different perspective based on online learning that underscores the importance of Adam's algorithmic components.
Inspired by \citet{cutkosky2023optimal}, we consider the framework called \emph{online learning of updates/increments}, where we choose the updates/increments of an optimizer based on an online learner.
With this framework, the design of a good optimizer is reduced to the design of a good online learner.
Our main observation is that Adam corresponds to a principled online learning framework called Follow-the-Regularized-Leader (FTRL).
Building on this observation, we study the benefits of its algorithmic components from the online learning perspective.
\end{abstract} 
\section{Introduction}

Let $F:\R^d \to \R$ be the (training) loss function we want to minimize. In machine learning applications,  $F$ is often minimized via an iterative optimization algorithm which starts at some initialization $\bw_0$ and recursively updates 
\begin{align} \label{exp:update}
\bw_{t+1} = \bw_{t} + \update_t \quad \text{for $t=0,1\dots$}\,,
\end{align}
where $\update_t$ denotes the update/increment\footnote{Sometimes, ``update'' refers to the iterate $\bw_t$, but throughout this work, we mean the increment $\bw_{t+1}-\bw_t$.} chosen by the algorithm at the $t$-th iteration.
Practical optimizers often choose the update $\update_t$ based on the past (stochastic) gradients $\bg_{1:t} = (\bg_1,\dots,\bg_t)$  where $\bg_t$ is the stochastic gradient of $F$ collected during the $t$-th iteration.
For instance, stochastic gradient descent (SGD) corresponds to choosing $\update_t = -\alpha_t \bg_t$ in \eqref{exp:update} for some learning rate $\alpha_t>0$.

For training deep neural networks, one of the most popular choices is the Adam optimizer \citep{kingma2015adam}.
In particular, several recent works have observed that Adam and its variants are particularly effective for training Transformer-based neural network models \citep{zhang2020adaptive,kunstner2023noise,jiang2022does,pan2023toward,ahn2023linear}.
Given some learning rate $\gamma_t>0$ and discounting factors $\beta_1,\beta_2\in (0,1)$, Adam chooses $\update_t$ on each  coordinate $i=1,2,\dots, d$  by combining $\bg_{1:t}$ as\footnote{For simplicity, we remove the debiasing step and the appearance of $\eps$ in the denominator used in the original paper.}
\[ \update_{t}[i] = -\gamma_t \frac{(1-\beta_1) \sum_{s=1}^t\beta_1^{t-s}\bg_s[i]}{ \sqrt{  (1-\beta_2^2)\sum_{s=1}^t (\beta_2^{t-s}\bg_s[i])^2}}\,,
\]
where $\bv[i]$ denotes the $i$-th coordinate of a vector $\bv$.
For a streamlined notation, we define the \emph{scaled learning rate} $\alpha_t \gets \gamma_t\cdot \nicefrac{(1-\beta_1)}{\sqrt{1-\beta_2^2}}$ and consider 
\[  \tag{Adam} \label{adam}
\update_{t}[i] = -\alpha_t \frac{ \sum_{s=1}^t\beta_1^{t-s}\bg_s[i]}{ \sqrt{ \sum_{s=1}^t (\beta_2^{t-s}\bg_s[i])^2}}\,.
\]
Compared to SGD, the notable components of Adam is the fact that it aggregates the past gradients $\bg_{1:t}$ (\emph{i.e.}, \textbf{momentum}) with the \textbf{discounting factors} $\beta_1,\beta_2$.

Despite the prevalent application of Adam in deep learning, our theoretical grasp of its mechanics remains incomplete, particularly regarding the roles and significance of its core elements: the \textbf{momentum} and the \textbf{discounting factors}. 
Most existing theoretical works on Adam and its variants primarily focus on characterizing the convergence rate for convex functions or smooth nonconvex functions~\citep{reddi2018on,zhou2018adashift,chen2019convergence,zou2019sufficient,alacaoglu2020new,guo2021novel,defossez2022simple,zhang2022adam,li2023convergence,wang2023closing} for which methods like SGD already achieve the minimax optimal convergence rate. 
In fact, the latest works in this line~\citep{li2023convergence,wang2023closing} both mention that their convergence rate of Adam gets worse with momentum~\citep[\S 6]{wang2023closing} or the rate of Adam is no better than that of SGD~\citep[\S 7]{li2023convergence}.
A notable exception is
\citet{crawshaw2022robustness} where they show the benefits of momentum in a variant of Adam, under the generalized smoothness conditions of \citet{zhang2020why}.

In this work, we take a different approach to understand Adam from a online learning perspective, as outlined below.

\subsection{Our Approach and Main Results}
Our starting point is the main insight of \citet{cutkosky2023optimal} that the design of nonconvex optimizers falls under the scope of \emph{online linear optimization}, an iconic setting in online learning. 
Specifically, one can regard the selection of the update $\update_t$ based on $\bg_{1:t}$ as an online prediction procedure.
Such a framework will be called \textbf{online learning of updates/increments} \eqref{olu}.

Building on this framework, we then notice that it is important to choose an online learner that performs well in dynamic environments~\citep{cutkosky2023optimal}. Better dynamic regret leads to better optimization performance (\autoref{thm:informal}), and therefore, the design of good optimizers is reduced to designing good \emph{dynamic online learners}. Along this line, our results can be summarized as follows:
\begin{itemize}[leftmargin=*,itemsep=0pt,topsep=0pt]
\item (\autoref{sec:olu}:) Our main observation is that the popular Adam optimizer corresponds to choosing a classical online learner called Follow-the-Regularized-Leader (FTRL) \citep{gordon1999regret,kalai2005efficient,shalev2006online,abernethy2008competing,nesterov2009primal,hazan2008extracting}.
Specifically, when using the framework \ref{olu}, Adam is recovered by plugging in a discounted instance of FTRL well-suited for dynamic environment, which we call \ref{dftrl}.

\item (\autoref{sec:dynamic}:) We provide the dynamic regret guarantees of \ref{dftrl} (\autoref{thm:dftrl} and \autoref{thm:dftrl-clip}) through a novel \textbf{discounted-to-dynamic conversion}. It gives us a new perspective on the role of Adam's algorithmic components, namely the momentum  and the discounting factors.  Our results suggest that \textbf{both components are crucial for designing a good dynamic online learner} (see \autoref{sec:components}).

\item (\autoref{sec:optimization}:)  We justify the importance of a good dynamic regret, via its implications for optimization. 
Along the way, we discuss  optimization settings for which Adam could be potentially beneficial. 

\end{itemize}

\section{Adam is FTRL in Disguise}
\label{sec:olu}

Iterative optimization algorithms are closely connected to adversarial online learning. For example, SGD is often analyzed through online gradient descent (OGD), its online learning counterpart.  
To exploit this connection in \eqref{exp:update}, the traditional approach is using an online learner to directly choose the \emph{iterates} $\bw_t$, as demonstrated by \citet{bottou1998online,cesa2004generalization,duchi10adagrad,li2019convergence,ward2019adagrad} and many more. Diverging from this common approach, we consider a new approach due to  \citet{cutkosky2023optimal} that applies the online learner to choose the \textit{updates} $\update_t$.

\subsection{Choosing Updates/Increments via Online Learning}
\label{sec:ol}
Consider an iconic setting of online learning called \emph{online linear optimization} (OLO). For the consistency with our optimization algorithm \eqref{exp:update}, we will introduce OLO using slightly nonstandard notations. In each round $t$, the algorithm  (or online learner) chooses a point $\update_t\in\R^d$, and then receives a linear loss function $\ell_t(\cdot)= \langle \bv_{t+1},\cdot\rangle$ and suffers the loss of $\ell_t(\update_t)$. 
In other words, it chooses $\update_t$ based on the previous loss sequence $\bv_{1:t}\coloneqq (\bv_1,\bv_2,\dots, \bv_t)$ and then receives the next loss $\bv_{t+1}$.
The performance of the online learner is measured by the \emph{regret} against a comparator sequence $\bu_{0:T-1}$, defined as 
\begin{align} \label{exp:dynamic_regret}
\regret_{T}(\bu_{0:T-1}) \coloneqq    \sum_{t=1}^{T} \inp{\bv_{t}}{\update_{t-1}-\bu_{t-1}}\,.
\end{align}
To be precise, \eqref{exp:dynamic_regret} is called the \emph{dynamic regret} in the literature~\citep{zinkevich2003online}. Another common metric, \emph{static regret}, is a special case of \eqref{exp:dynamic_regret} where all $\bu_t=\bu$; this is denoted as $\regret_{T}(\bu)$.

Now given an online learner \alg{}, we consider an optimization algorithm that outputs the $\update_t$ in \eqref{exp:update} using \alg{}. More formally, 
\begin{align} \tag{OLU}\label{olu}
\boxed{\text{$\update_t$ is chosen by \alg{} based on $\bg_{1:t}$.}}
\end{align}
We call this framework \emph{online learning of updates} (or \emph{online learning of increments}).
This framework was first proposed by \citet{cutkosky2023optimal} (under the name \emph{online-to-nonconvex conversion}) to design algorithms that find critical points for nonsmooth and nonconvex stochastic optimization problems.
Under \ref{olu}, we want \alg{} of choice to be a good online learner for dynamic environments, as summarized in the informal statement below.

\begin{theorem}[{\bf Importance of dynamic regret in \ref{olu}}; see \autoref{thm:guarantee}] \label{thm:informal}
In \ref{olu}, a better dynamic regret of \alg{} leads to a better optimization guarantee. Therefore, we want \alg{} to have a low dynamic regret.
\end{theorem}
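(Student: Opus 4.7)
The plan is to reduce the optimization suboptimality $F(\bw_T)-F(\bw_0)$ to the dynamic regret of \alg{} by implementing the online-to-nonconvex construction of \citet{cutkosky2023optimal}. The essential freedom we exploit is the choice of what to feed as $\bv_t$ to the learner and where to evaluate the gradient that produces $\bg_t$.

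First, I would define auxiliary random iterates $\bz_t \coloneqq \bw_{t-1} + s_t\update_{t-1}$ with $s_t \sim \mathrm{Unif}[0,1]$ drawn independently, and set $\bv_t = \bg_t$ to be a stochastic gradient of $F$ evaluated at $\bz_t$. By the fundamental theorem of calculus,
\[ \E\!\left[\langle \bg_t, \update_{t-1}\rangle \,\middle|\, \bw_{t-1}, \update_{t-1}\right] = F(\bw_t) - F(\bw_{t-1}), \]
so telescoping over $t = 1,\dots,T$ yields $\E[F(\bw_T)] - F(\bw_0) = \E\bigl[\sum_{t=1}^{T}\langle\bg_t,\update_{t-1}\rangle\bigr]$.

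Second, I would pick the hindsight comparator $\bu_{t-1} \coloneqq -D\cdot \bg_t/\|\bg_t\|$ for a tunable radius $D>0$, which maximizes the descent $-\langle \bg_t, \bu_{t-1}\rangle = D\,\|\bg_t\|$ among vectors of norm at most $D$. Substituting into the dynamic-regret identity \eqref{exp:dynamic_regret} gives
\[ \sum_{t=1}^{T}\langle \bg_t,\update_{t-1}\rangle = \regret_T(\bu_{0:T-1}) - D\sum_{t=1}^T \|\bg_t\|, \]
and combining with the telescoping identity and rearranging bounds the average gradient norm along the auxiliary trajectory as
\[ \frac{D}{T}\,\E\!\Big[\sum_{t=1}^T\|\bg_t\|\Big] \le \frac{F(\bw_0) - \inf F}{T} + \frac{\E[\regret_T(\bu_{0:T-1})]}{T}. \]
Any quantitative improvement in dynamic regret therefore propagates directly into a sharper stationarity guarantee, and outputting a uniformly random $\bz_{t^\star}$ converts this to a bound on $\E\|\nabla F(\bz_{t^\star})\|$.

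The main obstacle is that the proposed $\bu_{t-1}$ depends on $\bg_t$, i.e., on feedback the learner has not yet observed when it plays $\update_{t-1}$. The dynamic regret in \eqref{exp:dynamic_regret} does allow such hindsight comparators, but \alg{}'s concrete regret bound must hold against \emph{every} sequence inside its feasible set; this is what forces one to work with a bounded (or clipped) decision domain of radius $D$ and to tune $D$ at the end in terms of the problem parameters and the functional form of the regret. A secondary, routine issue is commuting the expectation over the auxiliary randomness $s_t$ and over the stochastic-gradient noise with the learner's internal randomness, which is standard provided the regret bound holds in conditional expectation.
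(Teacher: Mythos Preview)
Your proposal is correct and, in its core steps~1--4, follows exactly the paper's argument: use the fundamental theorem of calculus to write $F(\bw_t)-F(\bw_{t-1})$ as an expectation of $\langle \nabla F(\bw_{t-1}+s\update_{t-1}),\update_{t-1}\rangle$ over $s\sim\mathrm{Unif}[0,1]$, identify this with $\E[\langle \bg_t,\update_{t-1}\rangle]$ by evaluating the stochastic gradient at the randomized midpoint, and telescope to obtain $\E[F(\bw_T)]-F(\bw_0)=\E\bigl[\sum_t\langle\bg_t,\update_{t-1}\rangle\bigr]=\E\bigl[\sum_t\langle\bg_t,\bu_{t-1}\rangle+\regret_T(\bu_{0:T-1})\bigr]$ for any comparator sequence. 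This is precisely the content of \autoref{thm:guarantee} and its proof in the paper.

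Your steps~5--7 go beyond what the paper proves for this statement: the paper stops at the equality and leaves the comparator $\bu_{0:T-1}$ free, whereas you instantiate $\bu_{t-1}=-D\,\bg_t/\|\bg_t\|$ to extract an explicit stationarity bound $\tfrac{D}{T}\E\sum_t\|\bg_t\|\le \tfrac{F(\bw_0)-\inf F}{T}+\tfrac{\E[\regret_T]}{T}$. This is the natural downstream use (as in \citet{cutkosky2023optimal}) and is a welcome elaboration; the paper instead illustrates the equality through the concrete examples in \autoref{sec:abstract}--\ref{sec:concrete}. Your discussion of the ``obstacle'' that $\bu_{t-1}$ depends on $\bg_t$ is also accurate: the dynamic-regret definition allows hindsight comparators, and what matters is that \alg{}'s regret bound holds uniformly over the feasible set, which is why a bounded domain (or clipping) enters.
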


To understand this elegant reduction,
let us give examples of how \autoref{thm:guarantee} is applied.
Recent works \citep{cutkosky2023optimal,zhang2024random} choose an online gradient descent (OGD)~\cite{zinkevich2003online} as \alg{} to design algorithms for finding stationary points for nonconvex and nonsmooth functions.
When \alg{} is chosen as OGD, the resulting optimization algorithm under \ref{olu} turns out to be SGD with momentum \citep{zhang2024random}.
However, OGD is known to require a careful tuning of learning rate~\cite{zinkevich2003online}.
What if we use an adaptive online learner as \alg{}?

Our main observation is that Adam can be recovered by choosing \alg{} as an adaptive version of Follow-the-Regularized-Leader that is well-suited for dynamic environments, which we gradually elaborate. 

\subsection{Basics of Follow-the-Regularized-Leader (FTRL)}\label{sec:ftrl_basics}

Follow-the-Regularized-Leader (FTRL) is a classical algorithmic framework in online learning. Unlike the more intuitive descent-type algorithms, the key idea of FTRL is selecting the decisions by solving a convex optimization problem in each round.
Throughout, we focus on the 1D case of OLO ($d=1$) since the update of Adam is coordinate-wise.
In particular, the overall regret for the $d$-dimension would be the sum of the regret of each coordinate.

The 1D linear loss function is given by $\ell_t(\Delta) = v_{t+1}\Delta$ for $v_{t+1}\in \R$. We use the subscript $t+1$ to highlight that it is only revealed after deciding $\Delta_t$.

FTRL relies on a nonnegative convex \emph{regularizer} $\Phi$, which is set to $\Phi=\frac{1}{2}|\cdot|^2$ in this work.
The algorithm initializes at $\Delta_0 =0$ and in each round outputs
\begin{align}\tag{FTRL} \label{ftrl}
\Delta_t =  \argmin_{x}\Bigl[  
\frac{1}{\eta_t}\Phi(x) + \sum_{s=1}^{t} v_{s} x\Bigr]  = -\eta_t \sum_{s=1}^{t} v_{s} \,,
\end{align} 
where the effective step size $\eta_t>0$ is non-increasing in $t$. The remaining task is to choose good step sizes $\eta_t$.

One prominent choice is the \emph{adaptive} step size of the   \textbf{scale-free FTRL} algorithm \citep[\S 3]{orabona2018scale}  in the style of~\citet{mcmahan2010adaptive,duchi10adagrad}.
Scale-free FTRL chooses $\eta_t= \nicefrac{\alpha}{ \sqrt{ \sum_{s=1}^t v_s^2}}$ based on a scaling factor $\alpha>0$,  resulting in the update
\begin{equation}\label{soloftrl}
\Delta_t= -\alpha \frac{\sum_{s=1}^{t} v_{s}}{ \sqrt{ \sum_{s=1}^t v_s^2}}\,.
\end{equation}
Here if the denominator is zero, then we set the output $\Delta_t=0$.
The update 
\eqref{soloftrl} is independent of any constant scaling of loss sequence $v_{1:t}$, making it a \emph{scale-free} update.
This is beneficial when the magnitude of loss sequence varies across different coordinates.

We remark that the analysis of scale-free FTRL is in fact quite subtle, as echoed by \citet{mcmahan2017survey,orabona2018scale}. Using a different proof strategy, we prove a static regret bound (\autoref{thm:adagrad-ftrl}) of scale-free FTRL that slightly strengthens that of \citet{orabona2018scale}.  

\subsection{Adam Corresponds to Discounted-FTRL}

Now back to \ref{olu}, let us use \ref{ftrl} to choose the update $\Delta_t$. Denoting the coordinate-wise gradients in optimization by $g_{1:t}$, a na\"ive approach is to use \ref{ftrl} directly  by setting $v_t\gets g_t$.
Unfortunately, this approach is not a good one because \ref{ftrl} is designed to achieve low static regret, while   \ref{olu} requires low  \emph{dynamic} regret.
In fact, it is shown by \citep[Theorem 2]{jacobsen2022parameter} that  Algorithms of the form \eqref{ftrl} are \textbf{not} good dynamic online learners. See also the lower bounds in \autoref{thm:lower}.

One could already see intuitively why this is the case: any algorithm in this form does not ``forget the past'', as the output $\Delta_t$ is  the (regularized) minimizer of the cumulative loss $L_t(x)\coloneqq \sum_{s=1}^t v_s x$. Therefore, it is only competitive \emph{w.r.t.} a fixed comparator that minimizes $L_t(x)$, instead of a time-varying comparator sequence. 

To address this issue, our approach is to ``\textbf{discount}'' the losses from the distant past. In particular, we implement this by gradually up-scaling the losses over time. The intuition is that when deciding the output $\Delta_t$, the recent losses would have much higher ``weights'' compared to older ones, which essentially makes the latter negligible. 
\begin{theorem}[Informal; see Theorems~\ref{thm:dftrl} and \ref{thm:dftrl-clip}] \label{thm:informal_dynamic}
For some $\beta\in(0,1)$, the discounted version of scale-free FTRL that internally replaces $v_t$ by $\beta^{-t} v_t$ is a good dynamic online learner.
\end{theorem}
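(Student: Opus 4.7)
The plan is to establish the result in two conceptual steps: first derive a \emph{discounted} static regret bound for the algorithm by reducing to scale-free FTRL, and then convert this discounted bound into a genuine dynamic regret bound via a summation argument.

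For the first step, I begin by observing that scale-free FTRL is invariant under uniform positive rescaling of its loss sequence: multiplying every input by a fixed constant $c>0$ leaves the update $\Delta_t = -\alpha \sum_s \tilde v_s / \sqrt{\sum_s \tilde v_s^2}$ unchanged. Consequently, feeding discounted scale-free FTRL with the losses $\beta^{-s} v_s$ produces the same iterates as feeding scale-free FTRL with the losses $\beta^{T-s} v_s$ for any chosen reference horizon $T$. Invoking \autoref{thm:adagrad-ftrl} on the latter rescaling and any fixed comparator $u\in\R$ then yields, at every horizon $T$,
\[
\sum_{t=1}^{T} \beta^{T-t} v_t (\Delta_{t-1} - u) \leq \O\Bigl(|u|\,\sqrt{\textstyle\sum_{t=1}^T \beta^{2(T-t)} v_t^2}\Bigr).
\]
This inequality has the interpretation that the algorithm's cumulative loss, once discounted toward the present, is close to that of any fixed comparator up to a discounted variance term; loosely, the algorithm tracks any comparator well within its effective memory window of length $\sim 1/(1-\beta)$.

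For the second (and novel) step, the discounted-to-dynamic conversion, I apply the above discounted bound at every horizon $T\in\{1,\dots,N\}$ with the ``present'' comparator $u_{T-1}$, and form the weighted aggregate $(1-\beta)\sum_{T=1}^N$ of these inequalities. Swapping the order of the resulting double sum via the identity $\sum_{T=t}^N \beta^{T-t} = (1-\beta^{N-t+1})/(1-\beta)$ collapses the iterate side into the un-weighted cumulative loss $\sum_{t=1}^N v_t \Delta_{t-1}$, up to boundary corrections of order $\beta^{N-t+1}$. The analogous manipulation on the comparator side produces the dynamic comparator sum $\sum_{t=1}^N v_t u_{t-1}$, plus a residual ``path'' term of the form $(1-\beta)\sum_{T\geq t} \beta^{T-t} v_t (u_{T-1}-u_{t-1})$, which I would bound by the comparator's total variation $\sum_{t=1}^N |u_t-u_{t-1}|$ together with the per-step loss magnitude. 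Combining these pieces yields the dynamic regret bound claimed in \autoref{thm:dftrl}.

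The main obstacle will be executing this conversion cleanly: tracking the boundary terms arising from truncating $\sum_{T=t}^N$ instead of $\sum_{T=t}^\infty$, and matching the horizon-dependent discounted variance $\sqrt{\sum_t \beta^{2(T-t)} v_t^2}$ against an aggregate bound that no longer depends on a single reference horizon. Getting the comparator-side weighting exactly right, so as to avoid losing an extra factor of $1/(1-\beta)$ in the path-variation term, is particularly delicate and will dictate the final trade-off between $\beta$ and the comparator path length. For the clipped variant in \autoref{thm:dftrl-clip}, I would compose the above argument with a standard clipping analysis, applying the discounted-to-dynamic conversion to the clipped loss stream and bounding the clipping error separately, which handles adversarial ``spikes'' without disturbing the core summation argument.
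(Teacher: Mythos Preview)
Your two-step plan is correct and mirrors the paper's strategy closely. Step~1 is identical: the paper also obtains the discounted regret of \ref{dftrl} by applying the static bound of scale-free FTRL (\autoref{thm:adagrad-ftrl}) to the rescaled sequence $\beta^{-t}v_t$ and then multiplying through by $\beta^T$ (\autoref{thm:dftrl_discount_regret}). Step~2 is also essentially the same idea but packaged differently. The paper's key identity (\autoref{lem:to_static}) is the ``differential'' form $\regret_{t;\beta}(u)-\beta\regret_{t-1;\beta}(u)=v_t(\Delta_{t-1}-u)$, which upon summing yields exactly your $(1-\beta)\sum_T$ aggregate plus a boundary term $\beta\regret_{N;\beta}$; your direct double-sum swap is the integrated version of this. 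The paper then states the conversion more generally as an \emph{exact equality} for arbitrary partitions and arbitrary piecewise-constant comparators $\bar u_i$ (\autoref{thm:discount_to_dynamic}), taking the infimum over these choices, whereas you go straight to the trivial partition $\cup_t\{t\}$ with $\bar u_t=u_{t-1}$. Since the paper ultimately instantiates with the trivial partition anyway (\autoref{sec:simplify}), the final bounds coincide. Your worry about losing a $1/(1-\beta)$ factor in the path term is well placed; the fix, in both your route and the paper's, is the telescoping $u_{T-1}-u_{t-1}=\sum_{s=t}^{T-1}(u_s-u_{s-1})$ followed by reordering, which collapses your residual exactly to the paper's ``inter-partition variation'' $\beta\sum_s(\sum_{r\le s}\beta^{s-r}v_r)(u_s-u_{s-1})$.

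One point to correct: for the bounded-domain result (\autoref{thm:dftrl-clip}), the paper does \emph{not} clip the loss stream. The algorithm \ref{dftrl-clip} clips the \emph{output} $\Delta_t$ to $[-D,D]$, and the proof re-verifies that the FTRL analysis of \autoref{thm:adagrad-ftrl} survives this projection (\autoref{thm:dftrl-clip_discount_regret}), after which the same discounted-to-dynamic conversion applies. The gradient clipping you allude to does appear in the paper, but only as a proof device inside \autoref{thm:adagrad-ftrl} itself, not as a separate step for the bounded case. The practical consequence is that projection breaks the self-bounding relation $|\sum_s\beta^{t-s}v_s|\le \cc\sqrt{V_\beta}$, so the path term is controlled by the cruder $|\sum_s\beta^{t-s}v_s|\le G/(1-\beta)$, which is why the $P$-coefficient degrades from $(1-\beta)^{-1/2}$ to $(1-\beta)^{-1}$ in \autoref{thm:dftrl-clip}.
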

\noindent Remarkably, plugging this discounted scale-free FTRL into \ref{olu} would almost recover Adam. There are just two small issues: in the \ref{adam} update, the (scaled) learning rate $\alpha_t$ is time-varying, and we need two discounting factors $\beta_1$ and $\beta_2$ for the numerator and the denominator separately. It is not hard to fix this last bit, and we end up with an \ref{ftrl} instance which given the input $g_t$ picks
\begin{align}\label{eq:discounted_ftrl}
v_t\gets \beta_1^{-t} g_{t}\,,\quad \textrm{and}\quad\eta_t =\frac{\alpha_t (\beta_1/\beta_2)^t}{\sqrt{  \sum_{s=1}^t (\beta_2^{-s}g_s)^2}}\,.
\end{align}

Collecting all the pieces above yields our first main result. 

\begin{propbox}[\textbf{Adam is discounted-FTRL in disguise}] \label{prop:adam}
For some learning rate $\alpha_t>0$ and discounting factors $\beta_1,\beta_2 \in (0,1]$, \ref{ftrl} with \eqref{eq:discounted_ftrl} is equivalent to picking
\begin{align} \label{dftrl_with_beta12}
\Delta_t = -\alpha_t \frac{ \sum_{s=1}^t\beta_1^{t-s}g_s}{ \sqrt{  \sum_{s=1}^t (\beta_2^{t-s}g_s)^2}}\,.
\end{align}
Applying it as a coordinate-wise \alg{} in \ref{olu} recovers the Adam optimizer in \ref{adam}.
\end{propbox}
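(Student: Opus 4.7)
The plan is a direct algebraic verification, followed by a sanity check of the \ref{olu} correspondence. My approach is to substitute the specified choices $v_t \gets \beta_1^{-t} g_t$ and $\eta_t = \alpha_t (\beta_1/\beta_2)^t / \sqrt{\sum_{s=1}^t (\beta_2^{-s} g_s)^2}$ into the scalar \ref{ftrl} update $\Delta_t = -\eta_t \sum_{s=1}^{t} v_{s}$ derived in \autoref{sec:ftrl_basics}, and then simplify to recover the form displayed in \eqref{dftrl_with_beta12}. The Adam correspondence is immediate by inspection afterwards.

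The key step in the simplification is to rewrite the numerator sum and the denominator square root using the exponents $\beta_1^{t-s}$ and $\beta_2^{t-s}$ (the ``Adam form'') instead of $\beta_1^{-s}$ and $\beta_2^{-s}$ (the ``discounted-FTRL form''). Concretely, I will factor $\beta_1^{-t}$ out of the numerator via $\sum_{s=1}^{t} \beta_1^{-s} g_s = \beta_1^{-t} \sum_{s=1}^{t} \beta_1^{t-s} g_s$, and pull $\beta_2^{-t}$ out of the square root using $(\beta_2^{-s} g_s)^2 = \beta_2^{-2t} (\beta_2^{t-s} g_s)^2$. Writing $(\beta_1/\beta_2)^t = \beta_1^t \beta_2^{-t}$ in $\eta_t$, the $\beta_2^{-t}$ cancels against the $\beta_2^{-t}$ that was pulled out of the root, while the $\beta_1^{t}$ cancels the $\beta_1^{-t}$ from the numerator sum, leaving exactly the expression in \eqref{dftrl_with_beta12}.

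For the second claim, once \eqref{dftrl_with_beta12} is established, I will apply it coordinate-wise inside the \ref{olu} framework: taking $g_t$ to be the $i$-th coordinate of the stochastic gradient $\bg_t$ and running the above \alg{} independently on each coordinate produces $\Delta_t[i]$ matching the \ref{adam} display in the introduction verbatim. No further argument is needed because both the FTRL update and the Adam update are given by explicit coordinate-wise formulas.

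I expect no substantive obstacle: the statement is an exact identity rather than a bound, so no analytic estimate is required. The only thing that warrants care is verifying that the single scalar $(\beta_1/\beta_2)^t$ absorbed into $\eta_t$ is precisely what is needed to realign \emph{both} the numerator (governed by $\beta_1$) and the denominator (governed by $\beta_2$) simultaneously; the exponent arithmetic sketched above makes this transparent and also clarifies why this particular form of $\eta_t$ is the ``right'' one to choose.
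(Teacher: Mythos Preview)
Your proposal is correct and matches the paper's approach: the paper does not give a separate formal proof of \autoref{prop:adam} but simply states it after the preceding discussion (``Collecting all the pieces above yields our first main result''), which amounts to exactly the direct algebraic substitution and cancellation you outline.
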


Recall that \ref{olu} connects the problem of optimization to the well-established problem of dynamic regret minimization.
Given \autoref{prop:adam}, we make use of this connection to understand the components of Adam from the dynamic regret perspective.
That is the main focus of the next section.
Before getting into that, we briefly compare our approach with existing derivations of Adam based on FTRL.

\subsection{Comparison with the Previous Approach}

In fact, \citet{zheng2017follow} propose a derivation of Adam based on FTRL.
However, their approach is quite different than ours, as we detail below.

We first briefly summarize the approach of \citet{zheng2017follow}. Their main idea is to consider the ``weighted'' version of proximal-FTRL defined as
\[
    w_t = \argmin_{w}  \sum_{s=1}^{t}  \lambda_s \Bigl( \left\langle g_s, w \right\rangle + \frac{1}{2} \lVert w - w_{s-1} \rVert^2_{Q_s} \Bigr)\,,
\]
for some weights $\{\lambda_s\}$ and positive semi-definite matrices $\{Q_s\}$. 
Given this, their main observation is that Adam roughly corresponds to this proximal-FTRL with carefully chosen $\{\lambda_s\}$ and $\{Q_s\}$.

Although their motivation to explain Adam with a version of FTRL is similar to ours, we highlight that their approach is different than ours.
In fact, our approach overcomes some of the limitations of \citet{zheng2017follow}.  
\begin{itemize}[leftmargin=*,itemsep=0pt,topsep=0pt]
    \item Firstly, their derivation actually needs a heuristic adjustment of changing the anchor points of the regularizer from $w_{s-1}$ to $w_{t-1}$. A priori, it is not clear why such adjustment is needed, and to the best of our knowledge, there is no formal justification given. But with our approach, such an adjustment is naturally derived because under OLU, the online learner chooses the update/increment instead of the iterate.  
    \item Secondly, in \citet{zheng2017follow}, in order to recover Adam, they have to choose $\{w_s\}$ and $\{Q_s\}$ carefully, which also lacks justification. One of the main advantages of our approach is the fact that the discounting factors are theoretically justified via the dynamic regret perspective. More specifically, we show that without the discounting factor, FTRL is not a good dynamic learner.  
\end{itemize}

\section{Discounted-FTRL as a Dynamic Learner}
\label{sec:dynamic}

This section provides details on \autoref{thm:informal_dynamic}, focusing on the special case of \ref{adam} where $\beta_1,\beta_2=\beta$ for some $\beta\in (0,1]$, and $\alpha_t =\alpha$ for some $\alpha>0$. From \autoref{prop:adam} and using the same notation as \eqref{soloftrl}, this corresponds to the following coordinate-wise update rule:
\begin{align} \tag{$\beta$-FTRL} \label{dftrl}
\Delta_t = -\alpha \frac{ \sum_{s=1}^t\beta^{t-s}v_s}{ \sqrt{   \sum_{s=1}^t (\beta^{t-s}v_s)^2}}\,,
\end{align}
and if the denominator is zero, we define $\Delta_t=0$. We call this algorithm \ref{dftrl}. With $\beta=1$, it exactly recovers scale-free FTRL \eqref{soloftrl} which is shown to be a poor dynamic learner \citep{jacobsen2022parameter}. Therefore, we will  focus on $\beta<1$ in the dynamic regret analysis.

The earlier informal result (\autoref{thm:informal_dynamic}) is formalized in \autoref{thm:dftrl} (for unbounded domain) and \autoref{thm:dftrl-clip} (for bounded domain). We provide the simplified versions here, deferring the detailed adaptive version to \autoref{thm:dftrl_formal}.

\begin{thmbox}[{\bf Dynamic regret of \ref{dftrl}; unbounded domain}] \label{thm:dftrl}
For a loss sequence $v_{1:T}$,
consider \ref{dftrl} with $\beta < 1$ and some constant $\alpha>0$.
Let  
\begin{align}
\cc \coloneqq \max_{t\in[1,T]} \frac{ \abs{\sum_{s=1}^t\beta^{t-s}v_s}}{ \sqrt{   \sum_{s=1}^t (\beta^{t-s}v_s)^2}}\,.
\end{align} 
Then, for any comparator sequence $u_{0:T-1}$ such that $ \abs{u_t}\leq \alpha \cc$ for all $t$,  the dynamic regret $\regret_{T}(u_{0:T-1})$ is upper bounded by
\begin{align}
\O\Bigg(\frac{\bigl( \alpha \cc^2+\cc P \bigr)G}{\sqrt{1-\beta}}+\sqrt{1-\beta}\cdot  \alpha\cc^2GT\Bigg)\,.
\end{align} 
Here, $G\coloneqq\max_{t\in[1:T]}\abs{v_t}$,  and $P\coloneqq\sum_{t=1}^{T-1}\abs{u_{t}-u_{t-1}}$ is the \emph{path length}.
\end{thmbox}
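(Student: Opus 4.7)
The plan is to reduce the dynamic regret of \ref{dftrl} to a sum of \emph{discounted static} regrets via a telescoping identity, and then to control the discounted static regret using the scale-free FTRL static regret bound (\autoref{thm:adagrad-ftrl}) after a rescaling-of-losses argument. The starting observation is the algorithmic equivalence already hidden in \autoref{prop:adam}: running \ref{dftrl} on losses $v_{1:T}$ produces exactly the same iterates $\{\Delta_t\}$ as running scale-free FTRL \eqref{soloftrl} on the rescaled losses $\tilde v_s := \beta^{-s} v_s$. Applying \autoref{thm:adagrad-ftrl} to the rescaled problem therefore gives, against any fixed $u$, a static regret bound roughly of the form
\[ \sum_{s=1}^{t} \tilde v_s(\Delta_{s-1}-u) \lesssim \bigl(\alpha + |u|^2/\alpha\bigr)\sqrt{\sum_{s=1}^{t}\tilde v_s^2}. \]
Multiplying through by $\beta^{t}$ to cancel the exponential scaling and using the clean identity $\beta^{t}\sqrt{\sum_{s=1}^{t}\beta^{-2s}v_s^2} = \sqrt{\sum_{s=1}^{t}(\beta^{t-s}v_s)^2} \le G/\sqrt{1-\beta^2}$, I obtain the \emph{discounted static regret} bound
\[ \abs{A_t(u)} \lesssim \bigl(\alpha + |u|^2/\alpha\bigr)G/\sqrt{1-\beta}, \quad\text{where}\quad A_t(u):=\sum_{s=1}^{t}\beta^{t-s}v_s(\Delta_{s-1}-u), \]
which, under the hypothesis $|u| \le \alpha \cc$, simplifies to $|A_t(u)| \lesssim \alpha \cc^2 G/\sqrt{1-\beta}$.

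Next, I would convert this discounted static regret into a dynamic regret via a short algebraic manipulation. Starting from the recurrence $A_t(u) = \beta A_{t-1}(u) + v_t(\Delta_{t-1}-u)$, equivalent to $v_t(\Delta_{t-1}-u_{t-1}) = A_t(u_{t-1}) - \beta A_{t-1}(u_{t-1})$, and combining with the linearity-in-$u$ identity $A_t(u_{t-1})-A_t(u_t) = (u_t-u_{t-1})D_t$ with $D_t := \sum_{s=1}^{t}\beta^{t-s}v_s$, one telescopes to the clean decomposition
\[ \regret_T(u_{0:T-1}) = A_T(u_{T-1}) + (1-\beta)\sum_{t=1}^{T-1} A_t(u_{t-1}) + \beta\sum_{t=1}^{T-1}(u_t-u_{t-1})\,D_t. \]
I would then bound each piece: the first term is $\lesssim \alpha \cc^2 G/\sqrt{1-\beta}$ by Step~1, the second is $(1-\beta)\cdot T$ times the same, yielding $\sqrt{1-\beta}\cdot \alpha \cc^2 G T$, and for the third the very definition of $\cc$ gives $|D_t|\leq \cc\sqrt{\sum_s(\beta^{t-s}v_s)^2} \leq \cc G/\sqrt{1-\beta^2}$, so it is bounded by $\cc G P/\sqrt{1-\beta}$. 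Adding the three contributions matches the claimed bound.

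The main obstacle I expect lies in Step~1: verifying that \autoref{thm:adagrad-ftrl} indeed delivers the $|u|^2/\alpha$-shaped dependence (which is precisely what generates the $\cc^2$ factor in the final bound after substituting $|u|=\alpha\cc$), and that the $\beta^t$ factor \emph{exactly} cancels the exponential blow-up of the AdaGrad-type denominator $\sqrt{\sum \tilde v_s^2}$, which is dominated by its last term $\beta^{-t}|v_t|$. Because \autoref{thm:adagrad-ftrl} is parameter-free, the bound must hold for all $u$ simultaneously, and we have to make sure that rescaling does not introduce extra logarithmic overheads. The edge case of a zero denominator at early rounds (handled by the convention $\Delta_t=0$) and the boundary terms in the reindexing of Step~2 require only routine but explicit verification.
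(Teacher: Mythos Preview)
Your proposal is correct and follows essentially the same route as the paper: the rescaling $\tilde v_s=\beta^{-s}v_s$ plus \autoref{thm:adagrad-ftrl} is exactly how the paper obtains the discounted regret bound (\autoref{thm:dftrl_discount_regret}), and your telescoping identity $R_T=A_T(u_{T-1})+(1-\beta)\sum_t A_t(u_{t-1})+\beta\sum_t(u_t-u_{t-1})D_t$ is precisely the paper's discounted-to-dynamic conversion (\autoref{thm:discount_to_dynamic}) specialized to the singleton partition $\bigcup_t\{t\}$ with $\bar u_i=u_{i-1}$, which is the choice actually used in the simplification step. Your ``main obstacle'' is a non-issue: \autoref{thm:adagrad-ftrl} has exactly the $u^2/\alpha$ shape, and its residual term $2(\max_t|\Delta_t|)(\max_t|\beta^{T-t}v_t|)\le 2\alpha\cc\cdot G$ is absorbed since $\cc\ge 1$ (the ratio equals $1$ at the first nonzero round).
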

We sketch the proof of \autoref{thm:dftrl} in \autoref{sec:sketch}.

\autoref{thm:dftrl} may not seem straightforward, so let us start with a high level interpretation. 
First of all, the path length $P$ is a standard complexity measure of the comparator $u_{0:T-1}$ in the literature~\citep{herbster2001tracking}, which we examine closely in \autoref{sec:dynamic_basics}.
In the context of dynamic online learning, the above bound could be reminiscent of a classical result from \citep[Theorem~2]{zinkevich2003online}: on a domain of diameter $D$, the dynamic regret of \emph{online gradient descent} (OGD) with learning rate $\eta$ can be bounded as
\begin{align}\label{eq:ogd_dynamic}
\regret_{T}(u_{0:T-1})\leq   \O\rpar{\frac{D^2+DP}{\eta}+\eta G^2T}\,.
\end{align}
Intuitively, the choice of $\eta$ balances the two conflicting terms on the RHS, and a similar tradeoff remains as a recurring theme in the dynamic online learning literature \citep{hall2015online,zhang2018adaptive,jacobsen2022parameter}. In an analogous manner, the discounting factor $\beta$ in  \autoref{thm:dftrl} largely serves the similar purpose of balancing conflicting factors. A rigorous discussion is deferred to \autoref{sec:components}.

As a complementary result to \autoref{thm:dftrl}, we also present a dynamic regret bound for the case of \emph{a priori bounded domain}, where the outputs of online learner should lie in a bounded domain $[-D,D]$. 
In this case, we project the output of \ref{dftrl} to the given domain:
\begin{align} \tag{$\beta$-FTRL$_{D}$} \label{dftrl-clip}
\Delta_t = -\clip_{D}\rpar{\alpha \frac{ \sum_{s=1}^t\beta^{t-s}v_s}{ \sqrt{   \sum_{s=1}^t (\beta^{t-s}v_s)^2}}}\,,
\end{align}
where $\clip_D(x):=x\min(\frac{D}{|x |},1)$.
Then, with the same notations of $G$ and $P$ as in \autoref{thm:dftrl}, we get the following result (see \autoref{sec:dftrl-clip} for details).

\begin{thmbox}[{\bf Dynamic regret of \ref{dftrl-clip}; bounded domain}] \label{thm:dftrl-clip}
For $D>0$, consider any comparator sequence $u_{0:T-1}$ such that $\abs{u_t}\leq D$ for all $t$.
Then for any loss sequence $v_{1:T}$, \ref{dftrl-clip} with $\beta<1$ and $\alpha=D$ has the dynamic regret $\regret_{T}(u_{0:T-1})$ upper bounded by
\begin{align}
\O\rpar{  \frac{DG}{\sqrt{1-\beta}}+\frac{GP}{1-\beta} + \sqrt{1-\beta} DGT}\,.
\end{align} 
\end{thmbox}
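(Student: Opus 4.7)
The plan is to mirror the ``discounted-to-dynamic conversion'' strategy used for Theorem~\ref{thm:dftrl}, with two adjustments that exploit the bounded domain: (i) the scale-free ratio $\cc$ is replaced by the a priori bound $1$ (since clipping with $\alpha = D$ enforces $|\Delta_t|\leq D$), and (ii) the static-regret ``building block'' is the \emph{projected} scale-free FTRL bound instead of the unprojected one. This is why the bound of Theorem~\ref{thm:dftrl-clip} is structurally analogous to Theorem~\ref{thm:dftrl} but cleaner and free of data-dependent constants.

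I would first reparameterize via $\hat v_s \coloneqq \beta^{-s} v_s$. Under this reparameterization, the unclipped $\beta$-FTRL iterate equals the standard scale-free FTRL output on $\hat v_{1:t}$, and the $\clip_D$ step becomes the ordinary Euclidean projection onto $[-D,D]$ associated with projected scale-free FTRL. Using the paper's static regret bound for (projected) scale-free FTRL then yields, for any static $u^\star$ with $|u^\star|\leq D$,
\begin{equation*}
\sum_{t=1}^T \hat v_t(\Delta_{t-1} - u^\star) \;\leq\; \O\Bigl(D\sqrt{\textstyle\sum_{t=1}^T \hat v_t^2}\Bigr),
\end{equation*}
which is the building block to be converted into a dynamic regret on the original losses $v_t = \beta^t \hat v_t$.

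To perform the conversion, I would partition $[1{:}T]$ into consecutive blocks $B_1,\dots,B_K$ of effective length $W\coloneqq 1/(1-\beta)$, so that $K = \Theta(T(1-\beta))$. On each block $B_k$, fix a representative comparator $u^\star_k$ and decompose
\begin{equation*}
v_t(\Delta_{t-1} - u_{t-1}) \;=\; v_t(\Delta_{t-1} - u^\star_k) \;+\; v_t(u^\star_k - u_{t-1}).
\end{equation*}
The first piece, summed across a block, is bounded via the static-regret inequality above (note that the within-block discount $\beta^W$ is $\Theta(1)$, so $\sqrt{\sum_{t\in B_k}\hat v_t^2}$ translates back to $\O(G/\sqrt{1-\beta})$ on the original scale), giving $\O(DG/\sqrt{1-\beta})$ per block and hence the $\sqrt{1-\beta}\,DGT$ term after summing over $K$ blocks. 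The second piece is bounded by $G|B_k|P_k \leq GW P_k$, summing to $GP/(1-\beta)$. The remaining $DG/\sqrt{1-\beta}$ accounts for startup.

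The main obstacle is that $\beta$-FTRL is \emph{not} memoryless across blocks: losses from earlier blocks still feed into the current $\Delta_t$ through the tail $\beta^{t-s}$, so a naive block-based argument would incur a spurious factor of $1/(1-\beta)$ from the inter-block ``leakage'' in the path length term. The right remedy is to couple the block decomposition with an Abel-summation identity that directly converts the \emph{discounted} static regret $\sum_{t}\beta^{T-t}v_t(\Delta_{t-1}-u^\star)$ into the ordinary dynamic regret, ensuring each unit of path length $|u_t - u_{t-1}|$ is charged only against the $W$-sized discounted window it actually influences. Verifying that this bookkeeping delivers a tight $1/(1-\beta)$ (rather than $1/(1-\beta)^2$) in the path-length term, while also absorbing the inter-block residuals into the $DG/\sqrt{1-\beta}$ transient, is the principal technical hurdle.
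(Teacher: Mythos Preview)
Your overall strategy---reduce to a discounted (static) regret bound for the projected scale-free FTRL and then convert to dynamic regret via an Abel-type identity---is exactly what the paper does. The ``Abel summation'' you are searching for is precisely the paper's Lemma~\ref{lem:to_static}, namely $v_t(\Delta_{t-1}-u)=R_{t;\beta}(u)-\beta R_{t-1;\beta}(u)$, which telescoped over any subinterval yields the discounted-to-dynamic conversion (Theorem~\ref{thm:discount_to_dynamic}). So the high-level plan is correct and matches the paper.

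Where you make life harder for yourself is the block-of-size-$W$ partition. The paper instead applies the conversion with the \emph{singleton} partition $\{t\}$ and $\bar u_t=u_t$. With that choice the intra-partition variation vanishes identically, and the entire path-length contribution comes from the inter-partition term $\sum_{t}(\sum_{s\le t}\beta^{t-s}v_s)(u_t-u_{t-1})$, which is bounded by $(G/(1-\beta))\cdot P$ using the crude estimate $|\sum_{s\le t}\beta^{t-s}v_s|\le G/(1-\beta)$. No ``careful bookkeeping'' is needed to avoid a $(1-\beta)^{-2}$; the $1/(1-\beta)$ on $P$ falls out immediately. Your block approach would also work once you invoke the same Abel identity (it would produce both inter- and intra-partition variation terms, each of order $GP/(1-\beta)$), but the initial attempt to bound $\sum_{t\in B_k}v_t(\Delta_{t-1}-u_k^\star)$ directly from a prefix static-regret inequality indeed fails for the reason you identify: the algorithm is not restarted at block boundaries, and differencing two prefix upper bounds does not control the block sum. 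The singleton partition sidesteps this entirely.
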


Compared to \autoref{thm:dftrl}, the main difference is that the regret bound now holds simultaneously for all the loss sequences $v_{1:T}$ of \emph{arbitrary} size. The price to pay is the requirement of knowing $D$, and the multiplying factor on the path length $P$ is slightly worse, \emph{i.e.}, $(1-\beta)^{-1/2}\rightarrow (1-\beta)^{-1}$. A sneak peek into the details: such a slightly worse factor is due to the projection step breaking the \emph{self-bounding} property of \ref{dftrl}, which says the discounted gradient sum $\sum_{s=1}^t\beta^{t-s}v_s$ can be controlled by the maximum update magnitude $\sup \abs{\Delta_t}$ times the empirical variance of gradients, \textit{i.e.}, $\sqrt{\sum_{s=1}^t(\beta^{t-s}v_s)^2}$. Interested readers may compare Subsections \ref{sec:simplify} and \ref{sec:dftrl-clip} for the subtleties.

Moving forward, Theorems \ref{thm:dftrl} and \ref{thm:dftrl-clip} constitute our main results characterizing the dynamic regret of \ref{dftrl}. However, there is still one  missing piece. The earlier informal result  (\autoref{thm:informal_dynamic}) states that
\begin{center}
\textit{\ref{dftrl} is a ``good'' dynamic online learner}.
\end{center}

However, we have never explained which dynamic regret is good. Actually, the ``goodness'' criterion in dynamic online learning could be a bit subtle, as the typical sublinear-in-$T$ metric in static online learning becomes vacuous. Next, we briefly provide this important background.  

\subsection{Basics of Dynamic Online Learning}\label{sec:dynamic_basics}

Dynamic online learning is intrinsically challenging. 
It is well-known that regardless of the algorithm, there exist loss and comparator sequences such that the dynamic regret is at least $\Omega(T)$. This is in stark contrast to static regret bounds in OLO, where the standard minimax optimal rate is the sublinear in $T$, \emph{e.g.}, $\O\bigl(\sqrt{T}\bigr)$. 

To bypass this issue, the typical approach is through \emph{instance adaptivity}. Each combination of the loss and comparator sequences can be associated to a \emph{complexity measure}; the larger it is, the harder regret minimization becomes. Although it is impossible to guarantee sublinear-in-$T$ regret bounds against the hardest problem instance, one can indeed guarantee a regret bound that \emph{depends on} such a complexity measure. From this perspective, the study of dynamic online learning centers around finding suitable complexity measures and designing adaptive algorithms.

Only considering the comparator sequence $\bu_{0:T-1}$, the predominant complexity measure is the \emph{path length} $P\coloneqq\sum_{t=1}^{T-1}\norm{\bu_{t}-\bu_{t-1}}$ \citep{zinkevich2003online}, whose 1D special case is considered in \autoref{thm:dftrl} and \autoref{thm:dftrl-clip}. On a bounded domain with diameter $D$, the optimal dynamic regret bound is $\O(G\sqrt{DPT})$, which can be achieved through the classical result for OGD \eqref{eq:ogd_dynamic} with the \emph{$P$-dependent} learning rate $\eta=G^{-1}\sqrt{DP/T}$. On top of that, one could use a model selection approach \citep{zhang2018adaptive} to avoid the infeasible \emph{oracle tuning} (\textit{i.e.}, $\eta$ depends on the unknown $P$), at the expense of increased computation. Recent works \citep{jacobsen2022parameter,zhang2023unconstrained} further extend such results to unbounded domains. 

The essential ``goodness'' of this $\O\bigl(G\sqrt{DPT}\bigr)$ bound is due to $P\leq DT$. In the worst case the bound is trivially $\O(DGT)$, but if the comparator is \emph{easy} (\emph{i.e.}, $P=\O(D)$), then it becomes $\O\bigl(DG\sqrt{T}\bigr)$, recovering the well-known optimal static regret bound. In general, the goodness of a dynamic regret bound is usually measured by the exponents of both $P$ and $T$ (\textit{e.g.}, $\frac 12$ and $\frac 12$ in $\O \bigl (G\sqrt{DPT} \bigr )$).  

Given this background, we now use the dynamic regret results of \ref{dftrl} so far to interpret the role of two key components of Adam, namely the \textbf{momentum} (\emph{i.e.}, aggregating past gradients) and the \textbf{discounting factor $\beta$} (\emph{i.e.}, exponential moving average).

\subsection{Benefits of Momentum and Discounting Factor}
\label{sec:components}

Recall that a particular strength of the \ref{olu} framework is that it establishes \textbf{one-to-one correspondence} between optimizers and their online learning counterparts. 
Thus we can compare a variety of optimizers by \textbf{comparing their corresponding online learners}, from the perspective of dynamic regret. 
Notice that \ref{dftrl} from our analysis corresponds to the scaled parameterization of \eqref{adam}. Through that, our ultimate goal is to shed light on Adam's algorithmic components --- the momentum and the discounting factor.

We first discuss the baseline online learners  for this problem:
\begin{itemize}[leftmargin=*,itemsep=0pt,topsep=0pt]
\item To understand the momentum, we pick the baselines as a family of ``degenerate'' online learners that induce \emph{non-momentum optimizers}, such as SGD and AdaGrad \citep{duchi10adagrad}. 
Concretely, for the loss sequence $\bv_{1:T}$, this family of \alg{} in \ref{olu} has the following generic update rule: for some coordinate-wise learning rate $\balpha_t[i]>0$ $\forall i$, it outputs
\begin{align}  
\update_{t}[i] &= - \balpha_t[i] \bv_t[i]\,.   \label{no-momentum}  
\end{align}
For example, given a scalar $\alpha_t>0$, SGD chooses the coordinate-wise learning rate $\balpha_t[i]$ independently of the coordinates, \emph{i.e.},
\begin{align}
\balpha_t[i] &= \alpha_t\,, \tag{SGD}\label{sgd}
\end{align}
while AdaGrad further employs a variance-based preconditioning, \emph{i.e.},
\begin{align}
\balpha_{t}[i] &= \frac{\alpha_t}{{\sqrt{\sum_{s=1}^{t} \bv_s[i]^2}}}\,.   \tag{AdaGrad} \label{adagrad}
\end{align}
The important observation is that compared to the \eqref{ftrl} update rule, \textbf{the coordinate-wise update $\Delta_t[i]$ in \eqref{no-momentum} only scales linearly with the most recent observation $\bv_t[i]$}, instead of using the entire history $\bv_{1:t}[i]$. In other words, from the optimization perspective, this family of algorithms does not make use of the past history of gradients to decide the update direction.

\item To understand the discounting factor,  we pick the baseline as \ref{dftrl} with $\beta=1$ (\emph{i.e.}, no discounting). Alternatively, if the domain is bounded, then we use the clipped version of \ref{dftrl-clip} with $\beta=1$ instead. In other words, such baselines correspond to scale-free FTRL \eqref{soloftrl}. 
In light of \citet{jacobsen2022parameter}, the case of $\beta=1$ is not a good dynamic online learner, which we discuss more formally below.
\end{itemize}
The following lower bound, inspired by \citet{jacobsen2022parameter}, shows that the above baselines fail to achieve sublinear dynamic regret for a very benign example of $P=\O(1)$.
See \autoref{pf:lower} for a proof.
\begin{theorem}[Lower bounds for baselines] \label{thm:lower}
Consider a 2D online linear optimization problem with the bounded domain $[-1,1]^2$.
 For any given $T$,  there exist ($i$) a loss sequence  $\bv_1,\dots,\bv_T \in \R^2$ with $\|\bv_t\|=1$ for all $t$, and ($ii$) a comparator sequence $\bu_0,\dots,\bu_{T-1}\in[-1,1]^2$ with the coordinate-wise path length $\sum_{t=1}^{T-1} |\bu_t[i]-\bu_{t-1}[i] |\le 1$ for both $i=1,2$, such that the following holds:
 \begin{itemize}[leftmargin=*,itemsep=0pt,topsep=0pt]
 \item For all $t$, $\bu_{t-1}\in\argmin_{\bu\in[-1,1]^2}\inp{\bv_t}{\bu}$.
     \item  Any ``non-momentum'' online learner of the form \eqref{no-momentum} has the dynamic regret at least $T-3$.
     \item \ref{dftrl-clip} with $\beta=1$ and $D=1$ has the dynamic regret at least $(T-3)/2$.  
 \end{itemize} 
\end{theorem}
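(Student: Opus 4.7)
My plan is to exhibit an explicit $2$D hard instance and verify the three bullets by direct computation. A natural candidate is the alternating-coordinate loss sequence $\bv_t = \be_1$ for odd $t$ and $\bv_t = \be_2$ for even $t$, together with the static comparator $\bu_{t-1} = (-1,-1)$ throughout. The norm condition $\|\bv_t\|=1$, the zero coordinate-wise path length, and the argmin condition $\bu_{t-1}\in\argmin_{\bu\in[-1,1]^2}\inp{\bv_t}{\bu}$ are all immediate; for $\bv_t = \be_1$ the argmin set is $\{-1\}\times[-1,1]$, which contains $(-1,-1)$, and symmetrically for $\bv_t=\be_2$.

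For the non-momentum bullet I would exploit the orthogonality of consecutive $\bv_t$. Any update of the form $\update_t[i] = -\balpha_t[i]\,\bv_t[i]$ is supported on the same coordinate as $\bv_t$, hence in the coordinate orthogonal to $\bv_{t+1}$, so $\inp{\bv_{t+1}}{\update_t}=0$ for every $t\ge 1$. The learner's cumulative loss is therefore just $\inp{\bv_1}{\update_0} = 0$ (with $\update_0=\bzero$), while the comparator's cumulative loss equals $\sum_{t=1}^T \inp{\bv_t}{(-1,-1)} = -T$. Subtracting yields dynamic regret $\geq T$ independently of the $\balpha_t[i]$'s, and the loose constant $T-3$ absorbs small edge effects at initialization.

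For the FTRL bullet I would directly compute the coordinate-wise update $\update_t[i] = -\clip_1\!\bigl(S_t[i]/\sqrt{Q_t[i]}\bigr)$ with $S_t[i] := \sum_{s\le t}\bv_s[i]$ and $Q_t[i] := \sum_{s\le t}\bv_s[i]^2$, and lower-bound the per-round regret. On the vanilla alternating instance this ratio equals $\sqrt{\lceil t/2\rceil}$ or $\sqrt{\lfloor t/2\rfloor}$ coordinate-wise and saturates the clip at $-1$ within a constant number of rounds; after that \ref{dftrl-clip} matches the comparator and incurs only $O(1)$ regret. To reach $(T-3)/2$ I would therefore modify the instance so that in at least one coordinate the nonzero $\bv_t[i]$'s have small but consistent magnitudes (e.g., replace $\be_i$ by a vector whose ``other'' coordinate contributes a small but accumulating signal), delaying clip saturation for $\Omega(T)$ rounds while keeping the comparator stationary and in $\argmin$; then the per-round regret contribution is a constant times the un-saturated gap $1-S_t[i]/\sqrt{Q_t[i]}$, summing to $\Omega(T)$.

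The main obstacle is reconciling all three bullets in a single instance. The argmin condition combined with the per-coordinate path-length budget of $1$ essentially forces each $\bv_t[i]$ to have a fixed sign, which in turn makes FTRL's scale-free denominator engage the clip quickly; evading this requires just enough ``slow cancellation'' in one coordinate to keep $|S_t[i]|/\sqrt{Q_t[i]}$ bounded below $1$ without violating the argmin/path-length constraints. I expect this delicate balancing---in the spirit of the dynamic-regret lower bounds of \citet{jacobsen2022parameter}---to be the technical core of the proof, and my writeup would spend most of its effort constructing and analyzing exactly that modified instance, while the non-momentum bound follows cleanly from the orthogonality argument above.
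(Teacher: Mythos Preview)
Your non-momentum argument via orthogonality is correct and matches the paper's. The FTRL bullet, however, has a genuine gap. You propose keeping the comparator stationary at $(-1,-1)$ and delaying clip saturation by adjusting magnitudes, but the argmin condition with a static $(-1,-1)$ comparator forces $\bv_t[i]\ge 0$ for all $t,i$. For any nonnegative sequence, $\sum_{s\le t}\bv_s[i]\ge\sqrt{\sum_{s\le t}\bv_s[i]^2}$ (the cross terms in the square of the sum are nonnegative), so $S_t[i]/\sqrt{Q_t[i]}\ge 1$ as soon as $Q_t[i]>0$. The clip therefore saturates immediately at $-1$ and \ref{dftrl-clip} matches the comparator after the first nonzero hit in each coordinate, giving $O(1)$ regret. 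No choice of magnitudes can fix this; the ``slow cancellation'' you allude to would require some $\bv_s[i]<0$, which is incompatible with a stationary $(-1,-1)$ comparator under the argmin constraint.

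The paper's construction resolves this by letting the comparator move once: it flips the sign of $\bv_t$ halfway through (alternating $\pm\be_1,\pm\be_2$ in the two halves) and simultaneously flips $\bu_t$ from $(-1,-1)$ to $(1,1)$. This preserves the argmin condition at every $t$, uses only $O(1)$ path length, and---crucially---keeps the running sum $\sum_{s\le t}\bv_s[i]$ nonnegative throughout, so \ref{dftrl-clip} always outputs $\update_t[i]\in[-1,0]$. In the second half the comparator is $(1,1)$ while the losses are nonpositive, hence $\sum_t\inp{\bv_t}{\update_{t-1}}\ge -T/2$ and the dynamic regret is at least $T/2$. The missing idea in your proposal is precisely this one-time sign flip of both the losses and the comparator; without it, the sign rigidity you correctly identify makes the FTRL lower bound unattainable.
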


The first bullet point says that the constructed comparator sequence $\bu_{0:T-1}$ is the best ones \emph{w.r.t.} the loss sequence $\bv_{1:T}$, therefore the dynamic regret against such $\bu_{0:T-1}$ is a good metric to measure the strength of dynamic online learners. 
Then, the rest of the theorem shows that the two baselines above (corresponding to optimizers without momentum or discounting factor under \ref{olu}) cannot guarantee low regret against $\bu_{0:T-1}$, thus are not good dynamic online learners.

In contrast, \autoref{thm:dftrl-clip} shows that   \ref{dftrl-clip} with $\beta<1$ is a better dynamic online learner, and we make this very concrete through the following corollary. Again, it suffices to consider the 1D setting.

\begin{corollary} \label{cor:discount_bound}
For $D>0$, consider any comparator sequence $u_{0:T-1}$ such that $\abs{u_t}\leq D$ for all $t$.
Then, given any constant $c>0$, \ref{dftrl-clip} with $\beta = 1 - c T^{-\nicefrac{2}{3}}>0$ achieves the dynamic regret bound
\begin{align}
\regret_{T}(u_{0:T-1})\leq \O\left( DGT^{\nicefrac{2}{3}} c^{\nicefrac 1 2}( 1+ c^{-\nicefrac 3 2}P/D)\right)\,,
\end{align} 
which enjoys a $T^{\nicefrac{2}{3}}$ dependency on $T$. In particular, with the optimal tuning $c= \Theta\rpar{(P/D)^{\nicefrac{2}{3}}}$, the bound becomes $\O(D^{\nicefrac{2}{3}}GP^{\nicefrac{1}{3}}T^{\nicefrac{2}{3}})$. 
\end{corollary}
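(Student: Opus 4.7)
The plan is to derive this corollary as a direct algebraic consequence of \autoref{thm:dftrl-clip}, which already gives the dynamic regret bound of \ref{dftrl-clip} in closed form. Since no new algorithmic machinery is required, the proof amounts to substituting the prescribed $\beta$, simplifying each of the three terms, and then balancing over $c$.

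First I would invoke \autoref{thm:dftrl-clip} with $\alpha = D$ (which is the setting stated in the theorem), giving the regret bound
\begin{align}
\regret_T(u_{0:T-1}) \le \O\left( \frac{DG}{\sqrt{1-\beta}} + \frac{GP}{1-\beta} + \sqrt{1-\beta}\, DGT \right).
\end{align}
Plugging in $1 - \beta = c T^{-2/3}$ produces $\sqrt{1-\beta} = c^{1/2} T^{-1/3}$ and $(1-\beta)^{-1} = c^{-1} T^{2/3}$, so the three terms become, respectively, $DG\, c^{-1/2} T^{1/3}$, $GP\, c^{-1} T^{2/3}$, and $DG\, c^{1/2} T^{2/3}$.

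Next I would absorb the first term $DG\, c^{-1/2} T^{1/3}$ into the third term $DG\, c^{1/2} T^{2/3}$: treating $c$ as a constant and $T$ as growing, we have $c^{-1/2} T^{1/3} = o(c^{1/2} T^{2/3})$, so this is harmless inside $\O(\cdot)$. What remains is $\O\bigl(DG\, c^{1/2} T^{2/3} + GP\, c^{-1} T^{2/3}\bigr)$, and pulling out the common factor $DG\, T^{2/3} c^{1/2}$ matches the stated form
\begin{align}
\O\!\left( DG T^{2/3} c^{1/2}\bigl(1 + c^{-3/2}\, P/D\bigr) \right),
\end{align}
since $DG\, T^{2/3} c^{1/2} \cdot c^{-3/2} (P/D) = GP\, c^{-1} T^{2/3}$.

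Finally I would choose $c$ to balance the two summands inside the parenthesis, i.e.\ $1 \asymp c^{-3/2}\, P/D$, giving $c = \Theta((P/D)^{2/3})$. Substituting back, $c^{1/2} = \Theta((P/D)^{1/3})$ and the parenthesis is $\Theta(1)$, so the bound collapses to $\O\bigl(DG T^{2/3} (P/D)^{1/3}\bigr) = \O\bigl(D^{2/3} G P^{1/3} T^{2/3}\bigr)$, as claimed. There is no real obstacle here; the only minor subtlety is the absorption of $DG c^{-1/2} T^{1/3}$, which is clean asymptotically in $T$ with $c$ viewed as a constant, consistent with the corollary's stated regime.
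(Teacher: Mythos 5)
Your proof is correct and follows essentially the same route as the paper: invoke \autoref{thm:dftrl-clip}, substitute $1-\beta = cT^{-2/3}$, simplify, and absorb the lower-order $DG\,c^{-1/2}T^{1/3}$ term into $DG\,c^{1/2}T^{2/3}$ (valid since $c\geq T^{-1/3}$ in the regime of interest), then balance over $c$. The only difference is that you spell out the absorption step more explicitly, which the paper leaves implicit in a $\lesssim$.
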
 

The proof is deferred to \autoref{pf:cor:discount}. We emphasize that in the lower bound example of \autoref{thm:lower}, we have $P =\O(D)$, so the $\beta<1$ case achieves a sublinear dynamic regret bound of $\O(DGT^{\nicefrac{2}{3}})$. This suggests that in order to design a better dynamic online learner \textbf{both momentum and discounting factor are necessary}.

A similar result can be developed for the case of unbounded domain under an assumption regarding the 1D OLO environment that generates the losses $v_{1:T}$.

\begin{corollary}\label{cor:discount_unbound}
Assume the environment is well-behaved in the sense that $\cc \leq M$ for all $\beta\in (0,1]$. 
Consider any comparator sequence $u_{0:T-1}$ such that $\abs{u_t}\leq \alpha M$ for all $t$. Then, given any constant $c>0$, \ref{dftrl} with parameters $\alpha$ and $\beta = 1- cT^{-1}>0$ achieves the dynamic regret bound
\begin{align}
\regret_{T}(u_{0:T-1})\leq   \O\rpar{ \alpha M^2 G \sqrt{T} c^{\nicefrac 1 2}\rpar{1+\frac{c^{-1}P}{\alpha M}}}\,,
\end{align}
which enjoys a $\sqrt{T}$ dependency on $T$. In particular, with the optimal tuning $c = \Theta\bigl(P/ (\alpha M)\bigr)$, the bound becomes $\mathcal{O}\bigl(\alpha^{\nicefrac{1}{2}}M^{\nicefrac{3}{2}}G\sqrt{PT}\bigr)$. 

In contrast, \ref{dftrl} with the same $\alpha$ but the different $\beta=1$ achieves a dynamic regret bound which is linear in $P$:
\begin{align}
\regret_{T}(u_{0:T-1})\leq   \O\rpar{ M  GP\sqrt{T}}\,.
\end{align}
\end{corollary}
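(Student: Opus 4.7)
The plan is to derive the two bounds separately. The first (for $\beta = 1 - cT^{-1}$) is an immediate specialization of \autoref{thm:dftrl}: substitute $\beta = 1 - cT^{-1}$ and the environment assumption $\cc \leq M$ into that bound. Then $\sqrt{1-\beta} = \sqrt{c/T}$, and the two summands there become $(\alpha M^2 + MP)G\sqrt{T/c}$ and $\alpha M^2 G \sqrt{cT}$. Factoring $\alpha M^2 G \sqrt{T}$ out (and absorbing a subdominant $\alpha M^2 G\sqrt{T/c}$ piece using $\sqrt{c}+c^{-1/2}\lesssim \sqrt{c}(1+1/c)$) yields the displayed form $\alpha M^2 G \sqrt{T}\cdot c^{1/2}(1 + c^{-1}P/(\alpha M))$. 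The optimal-tuning claim then follows by balancing the two $c$-dependent pieces $\alpha M^2 G\sqrt{cT}$ and $MPG\sqrt{T/c}$: equating them gives $c = \Theta(P/(\alpha M))$, which upon substitution produces the claimed $\O(\alpha^{1/2}M^{3/2}G\sqrt{PT})$ bound.

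For the $\beta = 1$ case, \autoref{thm:dftrl} becomes vacuous, so I would use a distinct dynamic-to-static reduction. Since \ref{dftrl} with $\beta = 1$ reduces to scale-free FTRL \eqref{soloftrl}, pick an anchor $\bar u := u_0$ and decompose
\begin{align}
\regret_T(u_{0:T-1}) = \regret_T(\bar u) + \sum_{t=1}^T v_t(\bar u - u_{t-1}).
\end{align}
The static-regret piece $\regret_T(\bar u)$ is controlled by \autoref{thm:adagrad-ftrl} together with $|\bar u|\leq \alpha M$ and $\cc\leq M$, yielding an $\O(\alpha M^2 G \sqrt{T})$ bound. For the drift term, Abel summation gives
\begin{align}
\sum_{t=1}^T v_t(\bar u - u_{t-1}) = F_T(\bar u - u_{T-1}) + \sum_{t=1}^{T-1} F_t(u_t - u_{t-1}),
\end{align}
with $F_t := \sum_{s\leq t}v_s$ and $V_t := \sum_{s\leq t}v_s^2$. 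The key input is that the assumption $\cc \leq M$ evaluated at $\beta = 1$ reads $|F_t| \leq M\sqrt{V_t} \leq MG\sqrt{T}$ for all $t$. Together with $|\bar u - u_{T-1}|\leq P$ (because $\bar u = u_0$) and $\sum_t|u_t-u_{t-1}|=P$, each summand above is bounded by $MG\sqrt{T}\cdot P$. Combining everything yields $\regret_T(u_{0:T-1}) = \O(\alpha M^2 G \sqrt{T} + MGP\sqrt{T}) = \O(MGP\sqrt{T})$ in the linear-in-$P$ regime ($P\gtrsim \alpha M$).

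The hard part is the $\beta = 1$ analysis: \autoref{thm:dftrl} is unavailable, and the naive bound on the drift term $\sum_t v_t(\bar u - u_{t-1})$ only gives $\O(TP)$. The decisive observation is that the uniform $|F_t|\leq MG\sqrt{T}$ bound provided by $\cc\leq M$ at $\beta=1$ is precisely what turns the $\O(TP)$ drift estimate into the desired $\O(P\sqrt{T})$ dependence, quantitatively demonstrating the penalty paid by scale-free FTRL for not discounting the distant past. Once that is in hand, the rest is routine summation by parts plus the known static regret of scale-free FTRL from \autoref{thm:adagrad-ftrl}.
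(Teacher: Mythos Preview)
Your treatment of the $\beta=1-cT^{-1}$ case is exactly the paper's argument: substitute into \autoref{thm:dftrl} with $\cc\leq M$, simplify, and balance the two $c$-dependent terms.

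For $\beta=1$ you take a genuinely different (and more elementary) route than the paper. The paper does not restart from scratch; it revisits the intermediate inequality \eqref{eq:reduction_before_beta} obtained in the proof of \autoref{thm:dftrl} \emph{before} any use of $\beta<1$, and simply sets $\beta=1$ there. At $\beta=1$ the $(1-\beta)$-weighted sums vanish and one is left with $(\tfrac12\alpha M^2+\sqrt{2}\alpha)\sqrt{V_1(v_{1:T})}+2\alpha MG+M\sum_{t}\sqrt{V_1(v_{1:t})}\,|u_t-u_{t-1}|$, after which $\sqrt{V_1(v_{1:t})}\leq G\sqrt{T}$ yields the claim. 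Your approach instead decomposes $\regret_T(u_{0:T-1})=\regret_T(u_0)+\sum_t v_t(u_0-u_{t-1})$, handles the first piece via \autoref{thm:adagrad-ftrl}, and controls the drift by Abel summation together with $|F_t|\leq M\sqrt{V_t}\leq MG\sqrt{T}$. The two arguments are essentially equivalent: when $\beta=1$ the paper's $\variation$ term with the finest partition \emph{is} the Abel-summation drift, and the key self-bounding estimate $\abs{\sum_{s\leq t}v_s}\leq M\sqrt{V_t}$ plays the same role in both. Your version has the virtue of being self-contained (no need to reopen the proof of \autoref{thm:dftrl}); the paper's version has the virtue of showing that the $\beta=1$ bound is a degenerate instance of the same machinery. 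Both are correct, and both tacitly assume the regime $P\gtrsim \alpha M$ so that the $\O(\alpha M^2 G\sqrt{T})$ static piece is absorbed into $\O(MGP\sqrt{T})$.
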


Essentially, without the discounting factor we can show an $\O\bigl(P\sqrt{T}\bigr)$ dynamic regret bound, but with discounting the bound can be improved to the optimal rate $\O\bigl(\sqrt{PT}\bigr)$, under suitable tuning.  This provides another evidence that discounting is helpful for designing a dynamic online learner.  
We remark that without discounting, the dynamic regret of $\O\bigl(P\sqrt{T}\bigr)$ is  unimprovable in light of the lower bound result~\citep[Theorem 3]{jacobsen2022parameter}.

\subsection{Proof Sketch of \autoref{thm:dftrl}}
\label{sec:sketch}
Finally, we briefly sketch the proof of \autoref{thm:dftrl}, the dynamic regret bound of \ref{dftrl}. The proof of \autoref{thm:dftrl-clip} mostly follows the same analysis. 
Our analysis of the dynamic regret relies on the following  ``discounted''  regret. 
\begin{definition} [{$\beta$-discounted regret}]\label{def:discounted_regret}
For any discounting factor $\beta\in(0,1]$, the $\beta$-discounted regret is defined as
\[
\regret_{T;\beta}(u)\coloneqq \sum_{t=1}^{T} \beta^{T-t}v_{t}(\Delta_{t-1} -u)\,.
\]
\end{definition}
It is noteworthy that  the discounted regret has been considered in the concurrent works \citep{zhang2024discounted,jacobsen2024online} to adapt online learners to dynamic environments.
Moreover, the discounted regret has found to be useful in designing nonconvex optimization algorithms, as shown in \citep{zhang2024random,ahn2024adam}.
In our work, we propose a generic conversion approach to analyzing the dynamic regret using the discounted regret, called \textbf{discounted-to-dynamic conversion} (\autoref{thm:discount_to_dynamic}), which could be of independent interest. 
At a high level, our analysis follows the following steps.
\begin{align}
\boxed{\underbrace{\text{(\autoref{thm:dftrl_discount_regret}})}_{\text{\scriptsize{Discounted reg. of \ref{dftrl}}} }     \xLongrightarrow[\text{Discount-to-dynamic}]{\text{ \normalsize (\autoref{thm:discount_to_dynamic})}}  \underbrace{\text{(\autoref{thm:dftrl}})}_{\text{\scriptsize{Dynamic reg. of \ref{dftrl}}} } }
\end{align}
\begin{enumerate}[leftmargin=*,itemsep=0pt,topsep=0pt]
\item \ref{dftrl} is the discounted version of scale-free FTRL, and the latter is associated to a static regret bound (\autoref{thm:adagrad-ftrl}). Utilizing this relation, we naturally arrive at a  discounted regret bound of \ref{dftrl} (\autoref{thm:dftrl_discount_regret}). 
Intuitively, it measures the performance of \ref{dftrl} on an exponentially weighted, ``local'' look-back window that ends at time $T$. A particular strength is that the bound is \emph{anytime}, \emph{i.e.}, it holds for all $T$ simultaneously.

\item Next, consider the dynamic regret over the entire time horizon $[1,T]$. We can imagine partitioning $[1,T]$ into concatenating subintervals, and on each of them the dynamic regret can be approximately upper-bounded by suitable \emph{aggregations} of the above discounted regret bound (modulo certain approximation error) --- this is because the discounted regret bound mostly concerns the few recent rounds, so the dynamic regret can be controlled by the sum of such local metrics.
Formalizing this argument results in the  discounted-to-dynamic conversion in \autoref{thm:discount_to_dynamic}: the dynamic regret of an algorithm is expressed using its discounted regret  as an equality.
\end{enumerate}

Both \autoref{thm:dftrl} and~\ref{thm:dftrl-clip} are obtained by combining these two elements, with slightly different ways of relaxation. 

\section{Implications for Optimization}
\label{sec:optimization}

In this section, we provides the details of \autoref{thm:informal}, which justifies the importance of the dynamic regret guarantee of \alg{} in \ref{olu}, based on its implications for (non-convex) optimization.

Recall that in \ref{olu}, for the function $F$ we want to minimize, we choose the update $\update_t$ by the output of \alg{} on the loss sequence  $\bg_{1:t}$ that are stochastic gradients of $F$. Formally, we assume $F:\R^d \to \R$ is differentiable but not necessarily convex.
Following the notations of \citet{cutkosky2023optimal}, given an iterate $\bw$ and a random variable $z$, let $\sto(\bw,z)$ be the standard \emph{stochastic gradient oracle} of $F$ at $\bw$, satisfying $\E_z[\sto(\bw,z)]= \nabla F(\bw)$. 

We first introduce \citep[Theorem 7]{cutkosky2023optimal} that crucially connects the dynamic regret guarantee to the optimization guarantee, justifying the importance of the dynamic regret of \alg{}.

\begin{theorem}[{\bf Importance of dynamic regret in \ref{olu}}] \label{thm:guarantee} 
Consider the optimization algorithm \eqref{exp:update} where
\begin{itemize}[leftmargin=*,itemsep=0pt,topsep=0pt]
\item the update $\update_t$ is chosen by \alg{} based on $\bg_{1:t}$;
\item the gradient $\bg_t=\sto(\bw_{t-1}+s_t\update_t,z_t)$, where the i.i.d. samples $s_t\sim \textup{Unif}([0,1])$ and $z_t\sim z$.
\end{itemize}
Then, for all $T\geq 0$ and any comparator sequence $\bu_{0:T-1}$, the iterate $\bw_T$ generated by \eqref{exp:update} satisfies
\begin{equation}
\E\spar{F(\bw_{T})}- F(\bw_0) =  \E\spar{\sum_{t=1}^{T} \langle \bg_{t}, \bu_{t-1}\rangle +\regret_{T}(\bu_{0:T-1})},
\end{equation}
where $\regret_{T}(\bu_{0:T-1})$ is the regret bound of \alg{}.
\end{theorem}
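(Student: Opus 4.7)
The plan is to prove this identity by telescoping $F(\bw_T) - F(\bw_0) = \sum_{t=1}^T [F(\bw_t) - F(\bw_{t-1})]$ and rewriting each one-step difference as an expected inner product of the form $\E\spar{\inp{\bg_t}{\update_{t-1}}}$. Once this identity is in hand, the rest is a trivial add-and-subtract to surface the regret against $\bu_{0:T-1}$. The key technical gadget is that the uniform random variable $s_t\sim\text{Unif}([0,1])$ and the independent oracle sample $z_t$ together produce an unbiased sample of the line integral of $\nabla F$ along the segment from $\bw_{t-1}$ to $\bw_t$.

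Concretely, my first step is the fundamental theorem of calculus:
\begin{align}
F(\bw_t)-F(\bw_{t-1}) = \int_0^1 \inp{\nabla F(\bw_{t-1}+s\update_{t-1})}{\update_{t-1}}\,ds,
\end{align}
since $\bw_t = \bw_{t-1}+\update_{t-1}$ by \eqref{exp:update}. Next I rewrite this integral as the expectation over $s_t$:
\begin{align}
F(\bw_t)-F(\bw_{t-1}) = \E_{s_t}\spar{\inp{\nabla F(\bw_{t-1}+s_t\update_{t-1})}{\update_{t-1}}}.
\end{align}
Then I invoke the unbiasedness of the stochastic gradient oracle, $\E_{z_t}[\sto(\bw,z_t)]=\nabla F(\bw)$, conditional on $\bw_{t-1},\update_{t-1},s_t$, to replace the true gradient with $\bg_t=\sto(\bw_{t-1}+s_t\update_{t-1},z_t)$:
\begin{align}
F(\bw_t)-F(\bw_{t-1}) = \E_{s_t,z_t}\spar{\inp{\bg_t}{\update_{t-1}}\,\big|\,\bw_{t-1},\update_{t-1}}.
\end{align}
Summing over $t$ and taking total expectation via the tower property yields $\E[F(\bw_T)]-F(\bw_0) = \E\spar{\sum_{t=1}^T \inp{\bg_t}{\update_{t-1}}}$. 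Adding and subtracting $\inp{\bg_t}{\bu_{t-1}}$ in each summand splits the sum into the comparator term $\sum_t \inp{\bg_t}{\bu_{t-1}}$ plus the regret $\sum_t \inp{\bg_t}{\update_{t-1}-\bu_{t-1}} = \regret_T(\bu_{0:T-1})$, giving the claimed identity.

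The only real subtlety — and hence the ``hard part'' — is a careful accounting of the probabilistic dependence structure: $\update_{t-1}$ is itself random through the online learner's dependence on $\bg_{1:t-1}$, while $\bg_t$ depends on $\update_{t-1}$, $s_t$, and $z_t$. The crucial observation is that $s_t$ and $z_t$ are \emph{independent} of $\bw_{t-1}$ and $\update_{t-1}$, which is exactly what validates pulling $\update_{t-1}$ outside the inner expectation and applying oracle unbiasedness pointwise in $s_t$. Everything else (telescoping, the FTC, and the regret rearrangement) is routine. Note that we do \emph{not} need $F$ to be convex or smooth; only differentiability and integrability of $\nabla F$ along the segments are used.
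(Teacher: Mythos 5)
Your proof is correct and follows essentially the same route as the paper: telescope via the fundamental theorem of calculus, rewrite the line integral as an expectation over the uniform auxiliary variable, pass to the stochastic gradient via oracle unbiasedness and the tower property, and then add-and-subtract the comparator to surface the regret. The paper's own proof is terser (it stops at the FTC identity and says ``summing over $t$ and telescoping yield the desired equality''), so you have merely made explicit the unbiasedness step, the tower-property argument, and the regret rearrangement that the paper leaves implicit. One small note: you write $\bg_t=\sto(\bw_{t-1}+s_t\update_{t-1},z_t)$, which is the version consistent with the paper's update rule $\bw_{t+1}=\bw_t+\update_t$; the theorem statement's ``$\update_t$'' appears to be an indexing slip, and your reading is the intended one.
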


The proof is provided in \autoref{pf:thm:guarantee}.
The insight is that the nonconvexity can be handled by randomization (through $s_t$) at the gradient query, and if $F$ is convex, it suffices to set $s_t=1$ (which is more aligned with practice). Regarding the quantitative result, \autoref{thm:guarantee} precisely captures the informal claim from \autoref{thm:informal}:
\begin{center}
\emph{Improving the dynamic regret of \alg{} directly leads to better optimization guarantee.}
\end{center}  
Hence,  the effectiveness of  \ref{dftrl} as a dynamic online learner (discussed in \autoref{sec:dynamic}) supports the success of its optimizer counterpart, namely the update \eqref{adam}.

To make this concrete, we revisit the lower bound examples from \autoref{thm:lower}, and discuss the implications.

\subsection{Revisiting lower bound example}
\label{sec:abstract}
Consider an abstract scenario where we fix the stochastic gradient sequence $\bg_{1:T}$ to be given the lower bound example of \autoref{thm:lower}.
We compare the guarantees in \autoref{thm:guarantee},  
\begin{align} \label{rhs}
\sum_{t=1}^{T} \langle \bg_{t}, \bu_{t-1}\rangle   +   \regret_{T}(\bu_{0:T-1})\,.
\end{align}  

Then, the following is a direct corollary of \autoref{sec:components}.
\begin{corollary} \label{cor:opt}
Suppose $\bg_{1:T}$ and $\bu_{0:T-1}$ are chosen as in the $2D$ example of  \autoref{thm:lower}. Then, the following statements hold:
\begin{itemize}[leftmargin=*,itemsep=0pt,topsep=0pt] 
\item \textbf{Adam.} If \alg{} is \ref{dftrl-clip} with $1-\beta = \Theta(T^{-\nicefrac{2}{3}})$ and $D=1$, then $\eqref{rhs} = -T +o(T)$.

\item \textbf{No momentum (\emph{e.g.}, SGD/AdaGrad).} If \alg{} has the form \eqref{no-momentum}, then $\eqref{rhs} \geq -3$.

   \item \textbf{No discounting.} If \alg{} is \ref{dftrl-clip} with $\beta=1$ and $D=1$, then $\eqref{rhs} \geq -\frac{1}{2}T -\frac{3}{2}$.

\end{itemize}
\end{corollary}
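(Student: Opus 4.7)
The plan is to decompose \eqref{rhs} into its two summands---the ``linear'' term $\sum_{t=1}^T \inp{\bg_t}{\bu_{t-1}}$ and the dynamic regret $\regret_T(\bu_{0:T-1})$---and then bound each in turn using the ingredients assembled in \autoref{sec:dynamic}. Since the abstract scenario fixes $\bg_t=\bv_t$ with the loss and comparator sequences taken from \autoref{thm:lower}, the first bullet of that theorem gives $\bu_{t-1}\in\argmin_{\bu\in[-1,1]^2}\inp{\bv_t}{\bu}$ for every $t$. For the construction behind \autoref{thm:lower} (in which each $\bv_t$ is an axis-aligned unit vector), this minimum equals $-1$, so $\sum_{t=1}^T\inp{\bg_t}{\bu_{t-1}}=-T$, and it remains to bound $\regret_T(\bu_{0:T-1})$ in each of the three cases.

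For the Adam bullet, I would invoke \autoref{cor:discount_bound} coordinate by coordinate: \ref{dftrl-clip} is a coordinate-wise learner with $D=1$, and from the \autoref{thm:lower} construction each coordinate satisfies $G\leq 1$ and path length $P\leq 1$. With $1-\beta=\Theta(T^{-\nicefrac 23})$, the corollary yields a per-coordinate dynamic regret of $\O(T^{\nicefrac 23})$, so the full 2D regret satisfies $\regret_T(\bu_{0:T-1})=\O(T^{\nicefrac 23})=o(T)$; combined with the linear term this gives $\eqref{rhs}=-T+o(T)$ as claimed.

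For the remaining two bullets, the dynamic regret lower bounds already live in \autoref{thm:lower} and are applied directly. The no-momentum bullet uses $\regret_T(\bu_{0:T-1})\geq T-3$, giving $\eqref{rhs}\geq -T+(T-3)=-3$. The no-discounting bullet uses $\regret_T(\bu_{0:T-1})\geq (T-3)/2$, giving $\eqref{rhs}\geq -T+(T-3)/2=-T/2-3/2$.

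The whole proof is therefore a one-line substitution of previously established regret bounds, once the linear term is pinned down. The only step that requires care is verifying that the lower-bound construction of \autoref{thm:lower} indeed places each $\bv_t$ on a coordinate axis so that $\|\bv_t\|_1=\|\bv_t\|_2=1$ and the equality $\sum_t\inp{\bg_t}{\bu_{t-1}}=-T$ holds exactly rather than merely up to a factor between $1$ and $\sqrt 2$; I expect this to be transparent from the construction used to prove \autoref{thm:lower}, and it is the main (and only) potential obstacle.
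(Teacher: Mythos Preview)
Your approach is correct and matches the paper's, which simply declares the result ``a direct corollary of \autoref{sec:components}.'' The decomposition into the linear term and the dynamic regret, followed by invoking \autoref{cor:discount_bound} for the Adam bullet and the lower bounds of \autoref{thm:lower} for the other two, is exactly the intended argument.

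One small point to tighten: for the Adam bullet you establish only the \emph{upper} bound $\eqref{rhs}\leq -T+o(T)$, since \autoref{cor:discount_bound} bounds the regret from above. The claimed equality also needs $\eqref{rhs}\geq -T-o(T)$, which follows immediately because \ref{dftrl-clip} outputs $\update_{t-1}\in[-1,1]^2$ and $\bu_{t-1}$ is the minimizer of $\inp{\bv_t}{\cdot}$ over that same box, so $\regret_T(\bu_{0:T-1})\geq 0$. Also, as you correctly flag, the construction in the proof of \autoref{thm:lower} pads with zero losses when $T$ is not a multiple of $4$, so the linear term is $-\hat T$ with $T-\hat T\leq 3$; this $O(1)$ discrepancy is absorbed in the $o(T)$ for the first bullet and is harmless for the two inequalities.
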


Essentially, this result specializes the key insight from \autoref{sec:components}, \emph{i.e.}, the benefits of the momentum and the discounting factor, to the corresponding optimizers.

On the other hand, we acknowledge that the abstract scenario of fixing $\bg_{1:T}$ to be some desired sequence is not entirely practical, because ($i$) they should be stochastic gradients of $F$ and ($ii$) $\bg_{1:T}$ depends on the \alg{} of choice.
Below, we build on the intuitions of this abstract example and present a concrete classification problem where Adam is more beneficial than the other two baselines.

\subsection{Adam Could Be Effective for Sparse and Nonstationary Gradients}
\label{sec:concrete}

Inspired by \citet{duchi10adagrad}, we propose a concrete classification scenario for which we see the performance gap described in \autoref{cor:opt}. 
At a high level, it is designed such that the associated gradient sequence imitates the one from our lower bound construction, \autoref{thm:lower}. 

\textbf{Classification of sparse data with small $\eta$.} Consider the classification of $(\bz_i,y_i)$ where $\bz_i \in \R^d$ is the data vector with its label $y_i \in \{\pm 1\}$. We assume that each data vector $\bz_i$ is \textbf{sparse}.
Concretely, we focus on the setting where the dataset consists of coordinate vectors and positive labels, \emph{i.e.},  $\{(\bz_i,y_i)\}_{i=1}^d$ where $\bz_i= \be_i$ and $y_i=1$ for $i=1,\dotsc,d$. 
\begin{figure}
\begin{center}
\scalebox{0.6}{ \begin{tikzpicture}
\begin{axis}[ 
grid=major, 
 xmin=-1, xmax=4,
ymin=0, ymax=2.1 , %
enlarge x limits=false,  
enlarge y limits=false,  
unit vector ratio=1 1 1,  
]
\addplot[domain=-2:1, blue, ultra thick] {max(1-x, 0) + 0.25*abs(x)};
\addplot[domain=1:4, blue, ultra thick] {0.25*abs(x)};
\end{axis}
\end{tikzpicture}}
\end{center}
    \caption{1D illustration of the regularized hinge loss 
    $\ell(x) = \max(0,1-x) + \lambda |x|$. We illustrate the case $\lambda=1/4$.}
\label{fig:hinge}
\end{figure}
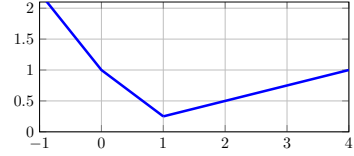
We consider the following regularized hinge loss 
\begin{align}
\ell(x) = \max(0,1 - x) + \lambda|x|\quad \text{for $\lambda <1$}\,, 
\end{align}
which prevents the classifier from becoming over-confident.
See \autoref{fig:hinge} for the landscape of this hinge loss.
Then, the training loss is given as
\begin{align}
F(\bw) =  \frac{1}{d} \sum_{i=1}^d  \ell (y_i\inp{\bz_i}{\bw}) \,,    
\end{align}
and during iteration $t$, assume the algorithm receives a single data $(\bz_{i(t)},y_{i(t)})$, where $i(t)$ is sampled from $\{1,\dotsc,d\}$ uniformly at random.

For experiments, we initialize at $\bw_0 = \mathbf{0}$ and use \textbf{a small learning rate},  $\eta = 0.01$, so that each coordinate takes multiple steps to approach the minimum $w=1$.
Besides, we choose $d=100$ and $\lambda=1/4$.  
Two different settings are considered:
\begin{enumerate}[leftmargin=*,itemsep=0pt,topsep=0pt]
\item {\bf Left plot of \autoref{fig:adam}:} $\bz_i= \be_i$ for $i=1,\dotsc, d$. 
\item {\bf Right plot of \autoref{fig:adam}:} $\bz_i= c_i\be_i$, where $c_i\sim \text{Unif}[0,2]$ for $i=1,\dotsc, d$. This setting allows the data vectors to have different magnitudes. 
\end{enumerate}
From \autoref{fig:adam}, SGD exhibits sluggish progress owing to the sparse nature of stochastic gradients—--that is, only one coordinate is updated at each step. Moreover, setting $\beta=1$ in Adam also results in suboptimal performance once the coordinate-wise iterate $\bw_t[i]$ exceeds $1$ (for some coordinate $i$)---after that, the stochastic gradients point toward other directions. In contrast, adopting Adam with $\beta<1$ effectively addresses these issues, adeptly managing \textbf{both the sparsity of updates and the non-stationarity of gradients}.

Next, we provide a possible qualitative explanation of this gap, from the dynamic regret perspective.

\begin{figure}
\begin{center} 
\includegraphics[width=0.5\textwidth]{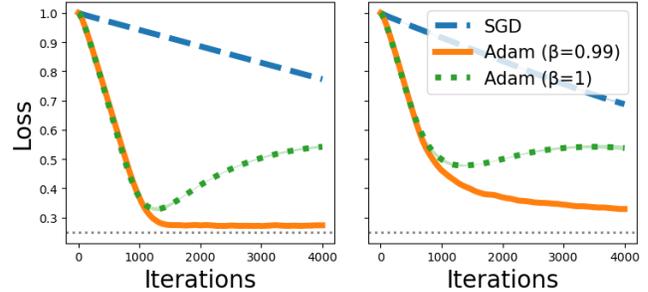}
\end{center}
\vspace{-10pt}
\caption{Experimental results for the hinge loss classification. ({\bf Left}) the case of  $\bz_i= \be_i$. ({\bf Right}) the case of  $\bz_i= c_i\be_i$ where $c_i\sim \text{Unif}[0,2]$. The horizontal dotted line indicates the optimum value of $F$. 
All experiments are run for five different random seeds, and we plot the error shades (they are quite small and not conspicuous).}
\label{fig:adam}
\end{figure}

\textbf{Qualitative dynamic regret analysis.}  
Since $i(t)$ is sampled uniformly, it suffices to focus on the first coordinate and consider the 1D setting for simplicity.
Then, since \textbf{learning rate $\eta$ is chosen small}, starting from  $w_0 = 0$, the above setting could be abstractly thought as generating the following sparse stochastic gradient sequence
\begin{align} \label{gt_abstract}
g_t = \begin{cases}
  (1-\lambda) \cdot \ind{i(t)=1} &\text{if }t\lesssim \tau\\
    -\lambda \cdot \ind{i(t)=1} &\text{if }t\gtrsim \tau
\end{cases}\,,
\end{align} 
where  $\tau$ denotes the first iteration such that $w_\tau>1$.
This can be seen as one of the simplest setting of a {\bf sparse} and {\bf non-stationary} gradient sequence, mirroring the construction from \autoref{thm:lower}.
Now we compare the 1D version of the guarantee \eqref{rhs}, \emph{i.e.}, the \emph{total loss} of the \alg{}
\begin{align} \label{rhs-1d}
\sum_{t=1}^{T}  g_{t} u_{t-1}   + 
\regret_{T}(u_{0:T-1})= \sum_{t=1}^{T}  g_{t} \Delta_{t-1}   
\end{align}  
for each \alg{} akin to \autoref{cor:opt}:
\begin{itemize}[leftmargin=*,itemsep=0pt,topsep=0pt]
\item {\bf No momentum.} 
Due to the sparsity in gradient sequence \eqref{gt_abstract}, having no momentum incurs a large dynamic regret.
More specifically, since $\E\abs{g_t g_{t-1}} \lesssim \frac{1}{d^2}$, we have
\begin{align}
    \E\abs{g_t  \Delta_{t-1}} \lesssim \frac{1}{d^2}\,, 
\end{align}
for ``non-momentum'' \alg{} of the form \eqref{no-momentum}.
Therefore, we have $\eqref{rhs-1d}=\sum_{t} g_t\Delta_{t-1}\gtrsim -  \frac{1}{d^2}T $. 
\item {\bf No discounting factor.} Due to the non-stationarity in gradient sequence \eqref{gt_abstract}, having no discounting factor also incurs a large dynamic regret.
In particular, \ref{dftrl} with $\beta=1$ generates the update $\Delta_t$ with the same sign as  $-\sum_t g_t$. In a typical run, the sign of $-\sum_t g_t$ remains unchanged throughout, but the sign of $g_t$ flips once $t \gtrsim \tau$.
 Hence, roughly speaking, the update $\Delta_t$ does not have the ``correct'' sign (corresponding to the \emph{descent} direction) after $t\gtrsim \tau$, leading to  $\eqref{rhs-1d}  \gtrsim - (1-\lambda)\tau$. 
\end{itemize}
In contrast, following \autoref{cor:opt}, Adam achieves $\eqref{rhs-1d}  \leq - (1-\lambda)\tau - \lambda(T-\tau) + o(T) = -\lambda T - (1-2\lambda)\tau  + o(T)$, improving the above when $\tau$ is small.

\section{Conclusion and Discussion}

This work presents a new perspective on the popular Adam optimizer, based on the framework of online learning of updates \eqref{olu} \citep{cutkosky2023optimal}.
Under \ref{olu}, our main observation is that Adam corresponds to choosing the dynamic version of FTRL that utilizes the discounting factor.
We find this perspective quite advantageous as it gives new insights into the role of Adam's algorithmic components, such as momentum and the exponential moving average. 

In fact, our perspective has already inspired a follow-up work by \citet{ahn2024adam}, where they show the optimal iteration complexity of Adam for finding stationary points under nonconvex and nonsmooth functions.
Their analysis crucially utilizes our perspective that Adam corresponds to \ref{dftrl} under \ref{olu}. 

In addition to \citep{ahn2024adam}, the findings in this work unlock several other important future directions. Below, we list a few of them.
\begin{itemize}[leftmargin=*,itemsep=0pt,topsep=0pt]
\item \textbf{Role of two discounting factors.} As an initial effort, this work considers the case of $\beta_1=\beta_2$. Given that the default choice in practice is $\beta_1 =0.9$ and $\beta_2 =0.999$, it would be important to understand the precise effect of choosing $\beta_1 <\beta_2$.

\item \textbf{Other algorithms based on \ref{olu}.} The framework \ref{olu} unlocks a new way to analyze optimization algorithms. As we highlighted in \autoref{sec:components}, \ref{olu} establishes the \emph{one-to-one correspondence} between other optimizers and their online learning counterparts.
The main scope of this work is to provide a better understanding of Adam specifically, and extending our framework to other popular optimizers, such as RMSProp~\citep{tieleman2012rmsprop}, AdaDelta~\citep{zeiler2012adadelta}, Lion~\citep{chen2023symbolic} etc, is an interesting future direction.
We believe that understand them based on our framework would offer new insights for them.

\item \textbf{Algorithm design based on \ref{olu}.} Our current dynamic regret analysis of \ref{dftrl} requires knowledge of the environment.
Developing a version of \ref{dftrl} that automatically adapts to the environment without prior knowledge might lead to more practical algorithms. 
Moreover, whether one can design practical optimizers based on recent advancements in dynamic online learning (\emph{e.g.} \citet{jacobsen2022parameter,zhang2023unconstrained}) would be an important future direction.

\item {\bf Fine-grained analysis of Adam for practical settings.}  As discussed earlier, Adam has gained significant attention due to its effectiveness in training language models.   Recently, \citet{kunstner2024heavy} investigate key characteristics of the language modeling datasets that might have caused the difficulties in training. In particular, they identify the \emph{heavy-tailed imbalance} property, where  there are a lot more infrequent words/tokens than frequent ones in most language modeling datasets. Further, they demonstrate this property as a main reason why Adam is particularly effective at language modeling tasks \cite{zhang2020adaptive}. 
We find their main insights consistent with our claim in \autoref{sec:concrete}. 
The infrequent words in the dataset would likely lead to sparse and non-stationary gradients.
Formally investigating this would be also an interesting future direction.

\end{itemize}

\section*{Acknowledgements}
Kwangjun Ahn is indebted to Ashok Cutkosky for several inspiring conversations that led to this project.

Kwangjun Ahn was supported by the ONR grant (N00014-23-1-2299), MIT-IBM Watson, a Vannevar Bush fellowship from Office of the Secretary of Defense, and NSF CAREER award (1846088). Yunbum Kook was supported in part by NSF awards CCF-2007443 and CCF-2134105. Zhiyu Zhang was supported by the funding from Heng Yang. 

\section*{Impact Statement}

This paper provides a new perspective of understanding the Adam optimizer.  
This work is theoretical, and we do not see any immediate potential societal consequences.

\bibliography{ref}

\begin{thebibliography}{53}
\providecommand{\natexlab}[1]{#1}
\providecommand{\url}[1]{\texttt{#1}}
\expandafter\ifx\csname urlstyle\endcsname\relax
  \providecommand{\doi}[1]{doi: #1}\else
  \providecommand{\doi}{doi: \begingroup \urlstyle{rm}\Url}\fi

\bibitem[Abernethy et~al.(2008)Abernethy, Hazan, and Rakhlin]{abernethy2008competing}
Jacob~D. Abernethy, Elad Hazan, and Alexander Rakhlin.
\newblock Competing in the dark: {A}n efficient algorithm for bandit linear optimization.
\newblock In \emph{Conference on Learning Theory (COLT)}, pages 263--274. Omnipress, 2008.

\bibitem[Ahn and Cutkosky(2024)]{ahn2024adam}
Kwangjun Ahn and Ashok Cutkosky.
\newblock Adam with model exponential moving average is effective for nonconvex optimization, 2024.

\bibitem[Ahn et~al.(2024)Ahn, Cheng, Song, Yun, Jadbabaie, and Sra]{ahn2023linear}
Kwangjun Ahn, Xiang Cheng, Minhak Song, Chulhee Yun, Ali Jadbabaie, and Suvrit Sra.
\newblock Linear attention is (maybe) all you need (to understand {Transformer} optimization).
\newblock In \emph{International Conference on Learning Representations (ICLR)}, 2024.

\bibitem[Alacaoglu et~al.(2020)Alacaoglu, Malitsky, Mertikopoulos, and Cevher]{alacaoglu2020new}
Ahmet Alacaoglu, Yura Malitsky, Panayotis Mertikopoulos, and Volkan Cevher.
\newblock A new regret analysis for {A}dam-type algorithms.
\newblock In \emph{International Conference on Machine Learning (ICML)}, pages 202--210. PMLR, 2020.

\bibitem[Bottou(1998)]{bottou1998online}
Leon Bottou.
\newblock Online learning and stochastic approximations.
\newblock \emph{On-Line Learning in Neural Networks}, 17\penalty0 (9):\penalty0 142, 1998.

\bibitem[Cesa-Bianchi et~al.(2004)Cesa-Bianchi, Conconi, and Gentile]{cesa2004generalization}
Nicol\`{o} Cesa-Bianchi, Alex Conconi, and Claudio Gentile.
\newblock On the generalization ability of on-line learning algorithms.
\newblock \emph{Information Theory, IEEE Transactions on}, 50\penalty0 (9):\penalty0 2050--2057, 2004.

\bibitem[Chen et~al.(2023)Chen, Liang, Huang, Real, Wang, Pham, Dong, Luong, Hsieh, Lu, et~al.]{chen2023symbolic}
Xiangning Chen, Chen Liang, Da~Huang, Esteban Real, Kaiyuan Wang, Hieu Pham, Xuanyi Dong, Thang Luong, Cho-Jui Hsieh, Yifeng Lu, et~al.
\newblock Symbolic discovery of optimization algorithms.
\newblock In \emph{Advances in Neural Information Processing Systems (NeurIPS)}, volume~36, 2023.

\bibitem[Chen et~al.(2019)Chen, Liu, Sun, and Hong]{chen2019convergence}
Xiangyi Chen, Sijia Liu, Ruoyu Sun, and Mingyi Hong.
\newblock On the convergence of {A} class of {A}dam-type algorithms for non-convex optimization.
\newblock In \emph{International Conference on Learning Representations (ICLR)}, 2019.

\bibitem[Crawshaw et~al.(2022)Crawshaw, Liu, Orabona, Zhang, and Zhuang]{crawshaw2022robustness}
Michael Crawshaw, Mingrui Liu, Francesco Orabona, Wei Zhang, and Zhenxun Zhuang.
\newblock Robustness to unbounded smoothness of generalized {S}ign{SGD}.
\newblock In \emph{Advances in Neural Information Processing Systems (NeurIPS)}, volume~35, pages 9955--9968, 2022.

\bibitem[Cutkosky(2019)]{cutkosky2019artificial}
Ashok Cutkosky.
\newblock Artificial constraints and hints for unbounded online learning.
\newblock In \emph{Conference on Learning Theory (COLT)}, pages 874--894, 2019.

\bibitem[Cutkosky et~al.(2023)Cutkosky, Mehta, and Orabona]{cutkosky2023optimal}
Ashok Cutkosky, Harsh Mehta, and Francesco Orabona.
\newblock Optimal stochastic non-smooth non-convex optimization through online-to-non-convex conversion.
\newblock In \emph{International Conference on Machine Learning (ICML)}, volume 202, pages 6643--6670. {PMLR}, 2023.

\bibitem[Daniely et~al.(2015)Daniely, Gonen, and Shalev-Shwartz]{daniely2015strongly}
Amit Daniely, Alon Gonen, and Shai Shalev-Shwartz.
\newblock Strongly adaptive online learning.
\newblock In \emph{International Conference on Machine Learning (ICML)}, pages 1405--1411. PMLR, 2015.

\bibitem[D{\'e}fossez et~al.(2022)D{\'e}fossez, Bottou, Bach, and Usunier]{defossez2022simple}
Alexandre D{\'e}fossez, Leon Bottou, Francis Bach, and Nicolas Usunier.
\newblock A simple convergence proof of {A}dam and {A}da{G}rad.
\newblock \emph{Transactions on Machine Learning Research (TMLR)}, 2022.

\bibitem[Duchi et~al.(2011)Duchi, Hazan, and Singer]{duchi10adagrad}
John Duchi, Elad Hazan, and Yoram Singer.
\newblock Adaptive subgradient methods for online learning and stochastic optimization.
\newblock \emph{Journal of Machine Learning Research (JMLR)}, 12\penalty0 (61):\penalty0 2121--2159, 2011.

\bibitem[Gordon(1999)]{gordon1999regret}
Geoffrey~J. Gordon.
\newblock Regret bounds for prediction problems.
\newblock In \emph{Conference on Learning Theory (COLT)}, pages 29--40. ACM, 1999.

\bibitem[Guo et~al.(2021)Guo, Xu, Yin, Jin, and Yang]{guo2021novel}
Zhishuai Guo, Yi~Xu, Wotao Yin, Rong Jin, and Tianbao Yang.
\newblock A novel convergence analysis for algorithms of the {A}dam family.
\newblock \emph{arXiv preprint arXiv:2112.03459}, 2021.

\bibitem[Hall and Willett(2015)]{hall2015online}
Eric~C Hall and Rebecca~M Willett.
\newblock Online convex optimization in dynamic environments.
\newblock \emph{IEEE Journal of Selected Topics in Signal Processing}, 9\penalty0 (4):\penalty0 647--662, 2015.

\bibitem[Hazan and Kale(2010)]{hazan2008extracting}
Elad Hazan and Satyen Kale.
\newblock Extracting certainty from uncertainty: {R}egret bounded by variation in costs.
\newblock \emph{Machine Learning}, 80:\penalty0 165--188, 2010.

\bibitem[Herbster and Warmuth(2001)]{herbster2001tracking}
Mark Herbster and Manfred~K Warmuth.
\newblock Tracking the best linear predictor.
\newblock \emph{Journal of Machine Learning Research (JMLR)}, 1\penalty0 (281-309):\penalty0 10--1162, 2001.

\bibitem[Jacobsen and Cutkosky(2022)]{jacobsen2022parameter}
Andrew Jacobsen and Ashok Cutkosky.
\newblock Parameter-free mirror descent.
\newblock In \emph{Conference on Learning Theory (COLT)}, pages 4160--4211. PMLR, 2022.

\bibitem[Jacobsen and Cutkosky(2024)]{jacobsen2024online}
Andrew Jacobsen and Ashok Cutkosky.
\newblock Online linear regression in dynamic environments via discounting.
\newblock In \emph{International Conference on Machine Learning}. PMLR, 2024.

\bibitem[Jiang et~al.(2023)Jiang, Malik, and Li]{jiang2022does}
Kaiqi Jiang, Dhruv Malik, and Yuanzhi Li.
\newblock How does adaptive optimization impact local neural network geometry?
\newblock In \emph{Advances in Neural Information Processing Systems (NeurIPS)}, 2023.

\bibitem[Kalai and Vempala(2005)]{kalai2005efficient}
Adam Kalai and Santosh Vempala.
\newblock Efficient algorithms for online decision problems.
\newblock \emph{Journal of Computer and System Sciences (JCSS)}, 71\penalty0 (3):\penalty0 291--307, 2005.

\bibitem[Kingma and Ba(2014)]{kingma2015adam}
Diederik Kingma and Jimmy Ba.
\newblock Adam: {A} method for stochastic optimization.
\newblock In \emph{International Conference on Learning Representations (ICLR)}, 2014.

\bibitem[Kunstner et~al.(2023)Kunstner, Chen, Lavington, and Schmidt]{kunstner2023noise}
Frederik Kunstner, Jacques Chen, Jonathan~Wilder Lavington, and Mark Schmidt.
\newblock Noise is not the main factor behind the gap between {SGD} and {A}dam on {T}ransformers, but sign descent might be.
\newblock In \emph{International Conference on Learning Representations (ICLR)}, 2023.

\bibitem[Kunstner et~al.(2024)Kunstner, Yadav, Milligan, Schmidt, and Bietti]{kunstner2024heavy}
Frederik Kunstner, Robin Yadav, Alan Milligan, Mark Schmidt, and Alberto Bietti.
\newblock Heavy-tailed class imbalance and why adam outperforms gradient descent on language models.
\newblock \emph{arXiv preprint arXiv:2402.19449}, 2024.

\bibitem[Li et~al.(2023)Li, Rakhlin, and Jadbabaie]{li2023convergence}
Haochuan Li, Alexander Rakhlin, and Ali Jadbabaie.
\newblock Convergence of {A}dam under relaxed assumptions.
\newblock In \emph{Advances in Neural Information Processing Systems (NeurIPS)}, 2023.

\bibitem[Li and Orabona(2019)]{li2019convergence}
Xiaoyu Li and Francesco Orabona.
\newblock On the convergence of stochastic gradient descent with adaptive stepsizes.
\newblock In \emph{International Conference on Artificial Intelligence and Statistics (AISTATS)}, pages 983--992. PMLR, 2019.

\bibitem[McMahan(2017)]{mcmahan2017survey}
H~Brendan McMahan.
\newblock A survey of algorithms and analysis for adaptive online learning.
\newblock \emph{The Journal of Machine Learning Research (JMLR)}, 18\penalty0 (1):\penalty0 3117--3166, 2017.

\bibitem[McMahan and Streeter(2010)]{mcmahan2010adaptive}
H.~Brendan McMahan and Matthew~J. Streeter.
\newblock Adaptive bound optimization for online convex optimization.
\newblock In \emph{Conference on Learning Theory (COLT)}, pages 244--256. Omnipress, 2010.

\bibitem[Nesterov(2009)]{nesterov2009primal}
Yurii Nesterov.
\newblock Primal-dual subgradient methods for convex problems.
\newblock \emph{Mathematical Programming}, 120\penalty0 (1):\penalty0 221--259, 2009.

\bibitem[Orabona(2019)]{orabona2019modern}
Francesco Orabona.
\newblock A modern introduction to online learning.
\newblock \emph{arXiv preprint arXiv:1912.13213}, 2019.

\bibitem[Orabona and P{\'a}l(2018)]{orabona2018scale}
Francesco Orabona and D{\'a}vid P{\'a}l.
\newblock Scale-free online learning.
\newblock \emph{Theoretical Computer Science}, 716:\penalty0 50--69, 2018.

\bibitem[Pan and Li(2023)]{pan2023toward}
Yan Pan and Yuanzhi Li.
\newblock Toward understanding why {Adam} converges faster than {SGD} for {Transformers}.
\newblock \emph{arXiv preprint arXiv:2306.00204}, 2023.

\bibitem[Reddi et~al.(2018)Reddi, Kale, and Kumar]{reddi2018on}
Sashank~J. Reddi, Satyen Kale, and Sanjiv Kumar.
\newblock On the convergence of {Adam} and beyond.
\newblock In \emph{International Conference on Learning Representations (ICLR)}, 2018.

\bibitem[Shalev{-}Shwartz and Singer(2006)]{shalev2006online}
Shai Shalev{-}Shwartz and Yoram Singer.
\newblock Online learning meets optimization in the dual.
\newblock In \emph{Conference on Learning Theory (COLT)}, volume 4005, pages 423--437. Springer, 2006.

\bibitem[Tieleman and Hinton(2012)]{tieleman2012rmsprop}
Tijmen Tieleman and Geoffrey Hinton.
\newblock {RMSProp}: Divide the gradient by a running average of its recent magnitude.
\newblock \emph{COURSERA Neural Networks Mach. Learn}, 17, 2012.

\bibitem[van Erven(2021)]{erven2021why}
Tim van Erven.
\newblock Why {FTRL} is better than {Online Mirror Descent}.
\newblock \emph{Personal Blog Post (Mathematics of Machine learning)}, 2021.
\newblock URL \url{https://www.timvanerven.nl/blog/ftrl-vs-omd/#fn:unbounded}.

\bibitem[Wang et~al.(2023)Wang, Fu, Zhang, Zheng, and Chen]{wang2023closing}
Bohan Wang, Jingwen Fu, Huishuai Zhang, Nanning Zheng, and Wei Chen.
\newblock Closing the gap between the upper bound and lower bound of {A}dam's iteration complexity.
\newblock In \emph{Advances in Neural Information Processing Systems (NeurIPS)}, 2023.

\bibitem[Ward et~al.(2019)Ward, Wu, and Bottou]{ward2019adagrad}
Rachel Ward, Xiaoxia Wu, and Leon Bottou.
\newblock Ada{G}rad stepsizes: {S}harp convergence over nonconvex landscapes.
\newblock In \emph{International Conference on Machine Learning (ICML)}, pages 6677--6686. PMLR, 2019.

\bibitem[Zeiler(2012)]{zeiler2012adadelta}
Matthew~D Zeiler.
\newblock Adadelta: an adaptive learning rate method.
\newblock \emph{arXiv preprint arXiv:1212.5701}, 2012.

\bibitem[Zhang et~al.(2020{\natexlab{a}})Zhang, He, Sra, and Jadbabaie]{zhang2020why}
Jingzhao Zhang, Tianxing He, Suvrit Sra, and Ali Jadbabaie.
\newblock Why gradient clipping accelerates training: {A} theoretical justification for adaptivity.
\newblock In \emph{International Conference on Learning Representations (ICLR)}, 2020{\natexlab{a}}.

\bibitem[Zhang et~al.(2020{\natexlab{b}})Zhang, Karimireddy, Veit, Kim, Reddi, Kumar, and Sra]{zhang2020adaptive}
Jingzhao Zhang, Sai~Praneeth Karimireddy, Andreas Veit, Seungyeon Kim, Sashank Reddi, Sanjiv Kumar, and Suvrit Sra.
\newblock Why are adaptive methods good for attention models?
\newblock In \emph{Advances in Neural Information Processing Systems (NeurIPS)}, volume~33, pages 15383--15393, 2020{\natexlab{b}}.

\bibitem[Zhang et~al.(2018{\natexlab{a}})Zhang, Lu, and Zhou]{zhang2018adaptive}
Lijun Zhang, Shiyin Lu, and Zhi-Hua Zhou.
\newblock Adaptive online learning in dynamic environments.
\newblock In \emph{Advances in Neural Information Processing Systems (NeurIPS)}, pages 1330--1340, 2018{\natexlab{a}}.

\bibitem[Zhang et~al.(2018{\natexlab{b}})Zhang, Yang, and Zhou]{zhang2018dynamic}
Lijun Zhang, Tianbao Yang, and Zhi-Hua Zhou.
\newblock Dynamic regret of strongly adaptive methods.
\newblock In \emph{International Conference on Machine Learning (ICML)}, pages 5882--5891. PMLR, 2018{\natexlab{b}}.

\bibitem[Zhang and Cutkosky(2024)]{zhang2024random}
Qinzi Zhang and Ashok Cutkosky.
\newblock Random scaling and momentum for non-smooth non-convex optimization.
\newblock In \emph{International Conference on Machine Learning}. PMLR, 2024.

\bibitem[Zhang et~al.(2022)Zhang, Chen, Shi, Sun, and Luo]{zhang2022adam}
Yushun Zhang, Congliang Chen, Naichen Shi, Ruoyu Sun, and Zhi-Quan Luo.
\newblock Adam can converge without any modification on update rules.
\newblock In \emph{Advances in Neural Information Processing Systems (NeurIPS)}, volume~35, pages 28386--28399, 2022.

\bibitem[Zhang et~al.(2023)Zhang, Cutkosky, and Paschalidis]{zhang2023unconstrained}
Zhiyu Zhang, Ashok Cutkosky, and Ioannis~Ch Paschalidis.
\newblock Unconstrained dynamic regret via sparse coding.
\newblock \emph{arXiv preprint arXiv:2301.13349}, 2023.

\bibitem[Zhang et~al.(2024)Zhang, Bombara, and Yang]{zhang2024discounted}
Zhiyu Zhang, David Bombara, and Heng Yang.
\newblock Discounted adaptive online prediction.
\newblock In \emph{International Conference on Machine Learning (ICML)}. PMLR, 2024.

\bibitem[Zheng and Kwok(2017)]{zheng2017follow}
Shuai Zheng and James~T Kwok.
\newblock Follow the moving leader in deep learning.
\newblock In \emph{International Conference on Machine Learning (ICML)}, pages 4110--4119. PMLR, 2017.

\bibitem[Zhou et~al.(2019)Zhou, Zhang, Lu, Wang, Zhang, and Yu]{zhou2018adashift}
Zhiming Zhou, Qingru Zhang, Guansong Lu, Hongwei Wang, Weinan Zhang, and Yong Yu.
\newblock Adashift: {D}ecorrelation and convergence of adaptive learning rate methods.
\newblock In \emph{International Conference on Learning Representations (ICLR)}, 2019.

\bibitem[Zinkevich(2003)]{zinkevich2003online}
Martin Zinkevich.
\newblock Online convex programming and generalized infinitesimal gradient ascent.
\newblock In \emph{International Conference on Machine Learning (ICML)}, pages 928--936, 2003.

\bibitem[Zou et~al.(2019)Zou, Shen, Jie, Zhang, and Liu]{zou2019sufficient}
Fangyu Zou, Li~Shen, Zequn Jie, Weizhong Zhang, and Wei Liu.
\newblock A sufficient condition for convergences of {A}dam and {RMSProp}.
\newblock In \emph{The IEEE/CVF Conference on Computer Vision and Pattern Recognition (CVPR)}, pages 11127--11135, 2019.

\end{thebibliography}
\bibliographystyle{plainnat}
\onecolumn
\appendix

\section{Analysis of scale-free FTRL}\label{pf:thm:dftrl_discount_regret}

Recall from \autoref{sec:olu} that our construction starts with a gradient adaptive FTRL algorithm called \emph{scale-free FTRL} \citep{orabona2018scale}. This section presents a self-contained proof of its undiscounted static regret bound. 

Formally, we consider the 1D OLO problem introduced at the beginning of \autoref{sec:olu}. Scale-free FTRL is defined as \ref{ftrl} with the step size $\eta_t  = \nicefrac{\alpha}{\sqrt{\sum_{s=1}^t v_s^2}}$, where $\alpha>0$ is a scaling factor. Equivalently, it has the update rule
\begin{align}
\Delta_t =  \argmin_{x}\left[  
 \frac{1}{\eta_t}|x|^2+ \sum_{s=1}^{t} v_{s} x  \right]  = -\eta_t \sum_{s=1}^{t} v_{s} = -\alpha \frac{\sum_{s=1}^{t} v_{s}}{\sqrt{\sum_{s=1}^t v_s^2}}\,.   
\end{align}
For well-posedness, if the denominator $\sqrt{\sum_{s=1}^t v_s^2}=0$, then we set the update to be $\Delta_t=0$.

\begin{theorem}[Static regret of scale-free FTRL] \label{thm:adagrad-ftrl}
For all $T>0$, loss sequence $v_{1:T}$ and comparator $u\in\R$, scale-free FTRL guarantees the following static regret bound
\begin{equation*}
\sum_{t=1}^{T}v_{t}(\Delta_{t-1}-u)\leq \rpar{\frac{u^2}{2\alpha}+\sqrt{2}\alpha}\sqrt{\sum_{t=1}^{T}v_t^2}+2 \rpar{\max_{t\in[1,T]}\abs{\Delta_t}}\rpar{\max_{t\in[1,T]}\abs{v_t}}\,.
\end{equation*}
\end{theorem}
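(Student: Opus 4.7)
My plan follows the classical FTRL analysis: decompose the regret into a ``penalty'' term (involving the final step size and the comparator) and a ``stability'' term (measuring how much the iterates move between consecutive rounds), and bound each piece. What makes scale-free FTRL subtle is the data-dependent, non-uniform step size $\eta_t = \alpha/\sqrt{S_t}$ with $S_t \coloneqq \sum_{s\le t}v_s^2$, which forces a careful treatment of the stability sum, separating rounds in which $v_t$ is small relative to $\sqrt{S_{t-1}}$ from rare ``outlier'' rounds where $v_t$ is comparable to or larger than $\sqrt{S_{t-1}}$.

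The first step uses the optimality of $\Delta_T$ as the minimizer of $F_T(x) \coloneqq x^2/(2\eta_T) + L_T x$ (writing $L_t \coloneqq \sum_{s\le t} v_s$). The inequality $F_T(\Delta_T) \le F_T(u)$, combined with the telescoping identity $\sum_t v_t (\Delta_{t-1}-u) = \sum_t v_t(\Delta_T - u) + \sum_t v_t(\Delta_{t-1}-\Delta_T)$, yields the classical bound
\[
\sum_{t=1}^{T} v_t(\Delta_{t-1}-u) \le \frac{u^2}{2\eta_T} - \frac{\Delta_T^2}{2\eta_T} + \sum_{t=1}^{T} v_t(\Delta_{t-1}-\Delta_t).
\]
Since $1/\eta_T = \sqrt{S_T}/\alpha$, the first term on the right is exactly $\tfrac{u^2}{2\alpha}\sqrt{S_T}$, producing the leading $u^2/(2\alpha)$ coefficient in the claimed bound. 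The remainder of the proof is devoted to controlling the stability sum $\sum_t v_t(\Delta_{t-1}-\Delta_t)$ by $\sqrt{2}\,\alpha\sqrt{S_T} + 2\max_t|\Delta_t|\max_t|v_t|$.

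For the stability sum, I would plug $\Delta_t = -\eta_t L_t$ into the difference to obtain the identity $v_t(\Delta_{t-1}-\Delta_t) = \eta_t v_t^2 + (\eta_t - \eta_{t-1})\, v_t L_{t-1}$. The first piece, $\sum_t \eta_t v_t^2$, is handled by the standard AdaGrad telescoping lemma $\sum_t v_t^2/\sqrt{S_t}\le 2\sqrt{S_T}$. For the step-size-change piece, substitute $L_{t-1} = -\Delta_{t-1}/\eta_{t-1}$ to rewrite its magnitude as $|v_t \Delta_{t-1}|\,(1-\eta_t/\eta_{t-1})$, and split the horizon: on ``regular'' rounds ($v_t^2 \le S_{t-1}$) one has $\eta_{t-1}\le\sqrt{2}\,\eta_t$ and $1-\eta_t/\eta_{t-1} \le v_t^2/S_t$, so the contribution folds back into a constant multiple of the AdaGrad telescoping and the $\sqrt{2}$ factor becomes the coefficient $\sqrt{2}\,\alpha$ of $\sqrt{S_T}$; on ``outlier'' rounds ($v_t^2>S_{t-1}$) the trivial bound $|v_t(\Delta_{t-1}-\Delta_t)| \le |v_t|(|\Delta_{t-1}|+|\Delta_t|)$ is tighter.

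The main obstacle I foresee is precisely the outlier regime: since every outlier at least doubles $S_t$, a naive count gives only $O(\log(S_T/v_1^2))$ outliers, and multiplying by $\max|v|\max|\Delta|$ per outlier would yield an unwanted logarithmic factor. Obtaining the sharp constant $2$ in $2\max|\Delta|\max|v|$ likely requires a careful telescoping of $1-\sqrt{S_{t-1}/S_t}$ across the ``doubling epochs'' of $S_t$, exploiting the fact that $\prod_t \sqrt{S_{t-1}/S_t} = \sqrt{S_0/S_T}$ so that the outlier jumps collapse to a geometric series bounded by a constant independent of $T$. Once this delicate step is in place, the rest of the argument is a direct combination of the AdaGrad telescoping and the $\sqrt{2}$-tightening on regular rounds.
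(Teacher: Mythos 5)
Your overall plan — penalty plus stability, then splitting the stability sum according to whether $v_t^2\le S_{t-1}$ — is close in spirit to the paper's proof (the paper introduces the clipped gradient $\tilde v_t = \clip_{\sqrt{S_{t-1}}}(v_t)$, which is precisely a split at the threshold $|v_t|\lessgtr\sqrt{S_{t-1}}$). However, the step you flag as the ``main obstacle'' is where the argument actually breaks. Telescoping $1-\sqrt{S_{t-1}/S_t}$ does not yield a constant: since $1-x\le -\log x$, the best you get from $\prod_t\sqrt{S_{t-1}/S_t}=\sqrt{S_0/S_T}$ is $\sum_t(1-\sqrt{S_{t-1}/S_t})\le\frac12\log(S_T/S_0)$, which is exactly the logarithmic factor you wanted to kill. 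A concrete bad instance is $v_t^2=S_{t-1}$ for all $t$ (each round doubles $S_t$): then every round is an ``outlier'' and the sum really is $\Theta(\log(S_T))$. There is no geometric collapse, because the per-outlier contribution $|v_t|(|\Delta_{t-1}|+|\Delta_t|)$ does not decay geometrically across outlier rounds.

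The fix, which is what the paper does (following the Cutkosky--Orabona ``artificial constraints'' idea), uses a different telescoping quantity. Rather than bounding the outlier contribution by a constant per round, one bounds the clipping error by the \emph{increment of the running maximum}: if $|v_t|>\sqrt{S_{t-1}}$ then $\sqrt{S_{t-1}}\ge G_{t-1}$ where $G_t\coloneqq\max_{s\le t}|v_s|$, and $|v_t|=G_t$, so $|v_t|-\sqrt{S_{t-1}}\le G_t-G_{t-1}$. Summed over $t$, this telescopes to $G_T=\max_t|v_t|$ with no log. Concretely, the paper writes the per-round stability bound with $\tilde v_{t+1}$ in place of $v_{t+1}$ (so the AdaGrad telescoping applies cleanly because $\tilde v_{t+1}^2\le S_t$), at the cost of an additive error $|v_{t+1}-\tilde v_{t+1}|(|\Delta_t|+|\Delta_{t+1}|)$, and then bounds $\sum_t|v_t-\tilde v_t|\le\sum_t(G_t-G_{t-1})\le G_T$. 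You would need to replace your product-of-ratios telescoping with this running-max telescoping to close the gap and obtain the stated constant $2$.
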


We remark that \citet{erven2021why} directly applies the clipping technique from \citep{cutkosky2019artificial} to obtain a similar regret bound as \autoref{thm:adagrad-ftrl}, but in this way the associated algorithm is \emph{scale-free FTRL on the ``clipped'' gradients}, rather than scale-FTRL itself.
In contrast, we analyze the original scale-free FTRL algorithm for the purpose of explaining Adam (since in practice, Adam does not use the \emph{Cutkosky-style clipping} on the stochastic gradients). This requires a slightly more involved analysis. 

\paragraph{Comparison with \citep{orabona2018scale}.} Before proving this theorem, we compare our regret bound with that of scale-free FTRL from \citep[Theorem~1]{orabona2018scale}. Their regret bound in the unconstrained domain setting (which means the domain diameter $D$ defined in their Theorem 1 is infinite) is 
\begin{equation*}
\sum_{t=1}^{T}v_{t}(\Delta_{t-1}-u)\leq \O\rpar{\rpar{u^2+1}\sqrt{\sum_{t=1}^{T}v_t^2}+\sqrt{T} \max_{t\in[1,T]}\abs{v_t}}\,.
\end{equation*}
Our bound replaces the $\sqrt{T}$-factor by the maximum output magnitude (\emph{i.e.}, $\max_{t\in[1,T]}\abs{\Delta_t}$), and our is better since
\begin{align*}
\abs{\Delta_t}=\alpha \frac{\abs{\sum_{s=1}^{t} v_{s}}}{\sqrt{\sum_{s=1}^t v_s^2}}\leq \alpha\sqrt{t}\,,
\end{align*}
which follows from the Cauchy-Schwarz inequality. We need such an improvement because in the discounted setting, the scaled loss sequence will have rapidly growing magnitude, which means this Cauchy-Schwarz step would be quite loose.

Our proof makes a nontrivial use of the gradient clipping technique from \citep{cutkosky2019artificial}, which is also different from \citep[Theorem~1]{orabona2018scale} and could be of independent interest. However, we acknowledge that directly modifying the argument of \cite{orabona2018scale} might achieve a similar goal.

\subsection{Proof of \autoref{thm:adagrad-ftrl}}
On the high level, the proof carefully combines the standard FTRL analysis, \emph{e.g.}, \citep[Lemma~7.1]{orabona2019modern}, and the gradient clipping technique of \cite{cutkosky2019artificial}.

\paragraph{Step 1.} We start with a preparatory step. Let $\tau$ be the index such that $v_t=0$ for all $t\leq\tau$, and $v_{\tau+1}\neq 0$. Without loss of generality, assume $T>\tau$. Then,
\begin{align*}
\sum_{t=1}^{T}v_{t}(\Delta_{t-1}-u)=\sum_{t=\tau+1}^{T}v_{t}(\Delta_{t-1}-u)\,.
\end{align*}
On the RHS we have $\Delta_\tau=0$, and for all $t>\tau$, $\Delta_t$ is now well-defined by the ``nice'' gradient adaptive update rule (\emph{i.e.}, the denominator does not cause a problem)
\begin{align}
\Delta_t= -\eta_t \sum_{s=1}^{t} v_{s} =  -\alpha \frac{\sum_{s=1}^{t} v_{s}}{\sqrt{\sum_{s=1}^t v_s^2}}\,.   
\end{align}
To proceed, for all $t>\tau$, we define $F_t(x)=\frac{1}{2\eta_t}|x|^2+\sum_{s=1}^{t}v_sx$, which means that $\Delta_t=\argmin_x F_t(x)$. Trivially, at the time index $\tau$, we define $F_\tau(x)=0$ for all $x$.

\paragraph{Step 2.} The main part of the proof starts from the standard FTRL equality \citep[Lemma~7.1]{orabona2019modern},
\begin{align*}
\sum_{t=1}^{T}v_{t}(\Delta_{t-1}-u)&= \sum_{t=1}^{T}v_{t}\Delta_{t-1}- \left(F_{T}(u) -\frac{1}{\eta_T} |u|^2\right)\\
&=\frac{1}{\eta_T} |u|^2 +\sum_{t=\tau}^{T-1}\spar{F_t(\Delta_t)-F_{t+1}(\Delta_{t+1})+v_{t+1}\Delta_t}+F_{T}(\Delta_T)-F_{T}(u)\\
&\leq \frac{1}{\eta_T} |u|^2 +\sum_{t=\tau}^{T-1}\spar{F_t(\Delta_t)-F_{t+1}(\Delta_{t+1})+v_{t+1}\Delta_t}\,,
\end{align*}
where the last inequality follows since $\Delta_T=\argmin_x F_T(x)$.

Consider the terms $F_t(\Delta_t)-F_{t+1}(\Delta_{t+1})+v_{t+1}\Delta_t$ in the above sum. Let us define the clipped gradient 
\begin{align}
\tilde v_t \coloneqq \clip_{\sqrt{\sum_{s=1}^{t-1}v_s^2}}(v_t)\,,
\end{align}
where for any $D\geq 0$, $\clip_D(x):=x \min(\frac{D}{|x|},1)$. 
\begin{itemize}
\item For all $t>\tau$, we have
\begin{align*}
&\quad~ F_t(\Delta_t)-F_{t+1}(\Delta_{t+1})+v_{t+1}\Delta_t\\
&=F_t(\Delta_t)+v_{t+1}\Delta_t-F_t(\Delta_{t+1})-v_{t+1}\Delta_{t+1}+\frac{1}{2\eta_t}\abs{\Delta_{t+1}}^2-\frac{1}{2\eta_{t+1}}\abs{\Delta_{t+1}}^2\\
&\leq F_t(\Delta_t)+v_{t+1}\Delta_t-F_t(\Delta_{t+1})-v_{t+1}\Delta_{t+1}\\
&\leq F_t(\Delta_t)+\tilde v_{t+1}\Delta_t-F_t(\Delta_{t+1})-\tilde v_{t+1}\Delta_{t+1}+\abs{v_{t+1}-\tilde v_{t+1}}\rpar{\abs{\Delta_t}+\abs{\Delta_{t+1}}}\,.
\end{align*}
Following a standard fact of convex functions, e.g., \citep[Lemma~7.8]{orabona2019modern}, since $F_t(\Delta)+\tilde v_{t+1}\Delta$ is $\frac{1}{\eta_t}$-strongly convex, it holds that
\begin{equation*}
F_t(\Delta_t)+\tilde v_{t+1} \Delta_t-F_t(\Delta_{t+1})-\tilde v_{t+1}\Delta_{t+1}\leq F_t(\Delta_t)+\tilde v_{t+1}\Delta_t-\min_{\Delta}\spar{F_t(\Delta)+\tilde v_{t+1}\Delta}\leq \frac{\eta_{t}}{2}\tilde v_{t+1}^2\,.
\end{equation*}
\item As for the case of $t=\tau$, since $F_\tau(\Delta_\tau)=0$ and $\Delta_\tau=0$,
\begin{align*}
F_\tau(\Delta_\tau)-F_{\tau+1}(\Delta_{\tau+1})+v_{\tau+1}\Delta_\tau&=-F_{\tau+1}(\Delta_{\tau+1})
=-\frac{1}{\eta_{\tau+1}}|\Delta_{\tau+1}|^2-\sum_{s=1}^{\tau+1}v_s\Delta_{\tau+1}
\leq -v_{\tau+1}\Delta_{\tau+1}\\
&\leq \abs{v_{\tau+1}-\tilde v_{\tau+1}}\abs{\Delta_{\tau+1}}\,.\tag{$\tilde v_{\tau+1}=0$}
\end{align*}
\end{itemize}
Thus, we obtain the following bound:
\[
\sum_{t=1}^{T}v_{t}(\Delta_{t-1}-u)\leq \frac{u^2}{2\alpha}\sqrt{\sum_{t=1}^{T}v_t^2}+\frac{\alpha}{2}\sum_{t=\tau+1}^{T-1}\frac{\tilde v_{t+1}^2}{\sqrt{\sum_{s=1}^{t}v_s^2}}+\sum_{t=1}^{T}\abs{v_{t}-\tilde v_{t}}\rpar{\abs{\Delta_{t-1}}+\abs{\Delta_{t}}}\,.
\]

\paragraph{Step 3.} Finally, consider the two summation terms on the RHS one-by-one. We begin with the first term. 
\begin{align}
\frac{\tilde v_{t+1}^2}{\sqrt{\sum_{s=1}^{t}v_s^2}}&=  \sqrt{2}\frac{\tilde v_{t+1}^2}{\sqrt{2\sum_{s=1}^{t}v_s^2}}\leq \sqrt{2}\frac{\tilde v_{t+1}^2}{\sqrt{\tilde v_{t+1}^2+\sum_{s=1}^{t} v_s^2}}=2\sqrt{2}\frac{\tilde v_{t+1}^2}{2\sqrt{\tilde v_{t+1}^2+\sum_{s=1}^{t} v_s^2}} \\
&\leq 2\sqrt{2}\frac{\tilde v_{t+1}^2}{\sqrt{\tilde v_{t+1}^2+\sum_{s=1}^{t} v_s^2} + \sqrt{ \sum_{s=1}^{t} v_s^2}} = 2\sqrt{2}\rpar{\sqrt{\tilde v_{t+1}^2+\sum_{s=1}^{t}v_s^2}-\sqrt{\sum_{s=1}^{t}v_s^2}}\,.
\end{align}
Thus, it follows that
\begin{align*}
\sum_{t=\tau+1}^{T-1}\frac{\tilde v_{t+1}^2}{\sqrt{\sum_{s=1}^{t}v_s^2}}&\leq 2\sqrt{2}\sum_{t=\tau+1}^{T-1}\rpar{\sqrt{\tilde v_{t+1}^2+\sum_{s=1}^{t}v_s^2}-\sqrt{\sum_{s=1}^{t}v_s^2}}
\leq 2\sqrt{2}\sum_{t=\tau+1}^{T-1}\rpar{\sqrt{ \sum_{s=1}^{t+1}v_s^2}-\sqrt{\sum_{s=1}^{t}v_s^2}}\\
&\leq 2\sqrt{2}\sqrt{\sum_{t=1}^{T}v_t^2}\,.
\end{align*}
As for the second summation, we handle it similarly to \citep[Theorem~2]{cutkosky2019artificial}.
Defining $G_t =\max_{s\in[1,t]}\abs{v_s}$, since $|\Delta_0|=0$, we have
\begin{align*}
\sum_{t=1}^{T}\abs{v_{t}-\tilde v_{t}}\rpar{\abs{\Delta_{t-1}}+\abs{\Delta_{t}}}&\leq 2\rpar{\max_{t\in[1,T]}\abs{\Delta_t}}\sum_{t=1}^T\abs{v_t-\tilde v_t}
= 2\max\rpar{0,\max_{t\in[1,T]}\abs{\Delta_t}}\sum_{t=1}^T\rpar{\abs{v_t}-\sqrt{\sum_{s=1}^{t-1}v_s^2}}\\
&\leq2\max\rpar{0,\max_{t\in[1,T]}\abs{\Delta_t}}\sum_{t=1}^T\rpar{G_t-G_{t-1}} \\
&\leq 2\rpar{\max_{t\in[1,T]}\abs{\Delta_t}}G_T\,.
\end{align*}    
Combining the two upper bounds above completes the proof.

\section{Analysis of \ref{dftrl}}
\label{pf:dftrl}
As discussed in \autoref{sec:olu},  \ref{adam} corresponds to \ref{dftrl}, the discounted version of scale-free FTRL, through the \ref{olu} framework. Thus, quantifying \ref{adam}'s performance comes down to analyzing the dynamic regret of \ref{dftrl}.

We now present the complete version of \autoref{thm:dftrl} (the dynamic regret bound of \ref{dftrl}), fleshing out the proof sketch in \autoref{sec:sketch}. 
A main proof ingredient is our \emph{discounted-to-dynamic conversion}. 
As a quick reminder, the formal setting considered is still the 1D OLO problem, where the output of the algorithm is denoted by $\Delta_t\in\R$, and the loss function is denoted by $\ell_t(x) = v_{t+1}x$ with $v_{t+1}\in \R$.

\paragraph{Step 1: Discounted regret.} Our analysis starts with a concept called \emph{discounted regret}, formalized in \autoref{def:discounted_regret}.
We recall the definition below for reader's convenience.
\begin{definition} [{$\beta$-discounted regret}] 
For any discounting factor $\beta\in(0,1]$, the $\beta$-discounted regret is defined as
\[
\regret_{T;\beta}(u)\coloneqq \sum_{t=1}^{T} \beta^{T-t}v_{t}(\Delta_{t-1} -u)\,.
\]
\end{definition}
\noindent 
When $\beta=1$, the $\beta$-discounted regret recovers the standard static regret $\regret_{T}(u)$. The notational difference is simply an extra subscript $\beta$ in the $\beta$-discounted regret, \emph{i.e.}, $\regret_{\cdot;\beta}$. 

Intuitively, \ref{dftrl} should achieve good $\beta$-discounted regret, as long as scale-free FTRL achieves good static regret.
This intuition follows from observations that \ref{dftrl} is just scale-free FTRL with the ``discounted losses'' $v_t\leftarrow\beta^{-t}v_t$, and that the $\beta$-discounted regret considers the loss sequence $\beta^{-t} v_{t}$ instead of $v_t$. We formalize this with the proof in \autoref{proof:dftrl_discount_regret}.

\begin{theorem}[Discounted regret of \ref{dftrl}] \label{thm:dftrl_discount_regret}
For all $T>0$, loss sequence $v_{1:T}$ and comparator $u\in\R$, \ref{dftrl} guarantees the $\beta$-discounted regret bound
\[
\regret_{T;\beta}(u)\leq \left (\frac{u^2}{2\alpha}+\sqrt 2\alpha\right )\sqrt{\sum_{t=1}^{T} (\beta^{T-t}v_t)^2}+2\left (\max_{t\in [1,T]} \lvert \Delta_t\rvert\right )\left (\max_{t\in [1,T]} \lvert \beta^{T-t} v_t\rvert\right )\,.
\]
\end{theorem}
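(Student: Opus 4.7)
The plan is to reduce the discounted regret of \ref{dftrl} to the undiscounted static regret of scale-free FTRL (\autoref{thm:adagrad-ftrl}) via a change of variables that absorbs the discounting into the loss sequence.

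First, I would observe the key algebraic identity: \ref{dftrl} applied to the loss sequence $v_{1:T}$ produces exactly the same iterates as scale-free FTRL applied to the rescaled loss sequence $\tilde v_t \coloneqq \beta^{-t} v_t$. Indeed, plugging $\tilde v_s$ into \eqref{soloftrl} yields
\begin{equation*}
\tilde\Delta_t = -\alpha\,\frac{\sum_{s=1}^{t}\beta^{-s}v_s}{\sqrt{\sum_{s=1}^{t}\beta^{-2s}v_s^2}} = -\alpha\,\frac{\beta^{-t}\sum_{s=1}^{t}\beta^{t-s}v_s}{\beta^{-t}\sqrt{\sum_{s=1}^{t}(\beta^{t-s}v_s)^2}} = \Delta_t\,,
\end{equation*}
after factoring $\beta^{-t}$ out of the numerator and $\beta^{-2t}$ out of the denominator under the square root. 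So the iterate sequences coincide exactly, and the same well-posedness convention (output $0$ when the denominator vanishes) is preserved.

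Next, I would rewrite the $\beta$-discounted regret in terms of the rescaled losses: since $\beta^{T-t} v_t = \beta^T \tilde v_t$, we have
\begin{equation*}
\regret_{T;\beta}(u) = \sum_{t=1}^{T}\beta^{T-t} v_t(\Delta_{t-1}-u) = \beta^T\sum_{t=1}^{T}\tilde v_t(\tilde\Delta_{t-1}-u)\,.
\end{equation*}
Now the inner sum is exactly the static regret of scale-free FTRL on the loss sequence $\tilde v_{1:T}$ against the comparator $u$, so I can invoke \autoref{thm:adagrad-ftrl} directly to bound it by $\bigl(\tfrac{u^2}{2\alpha}+\sqrt{2}\alpha\bigr)\sqrt{\sum_{t=1}^{T}\tilde v_t^2}+2(\max_t|\tilde\Delta_t|)(\max_t|\tilde v_t|)$.

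Finally, I would distribute the prefactor $\beta^T$ through this bound. For the square-root term, $\beta^T\sqrt{\sum_t \tilde v_t^2} = \sqrt{\sum_t \beta^{2T}\beta^{-2t}v_t^2} = \sqrt{\sum_t(\beta^{T-t}v_t)^2}$; for the maximum term, $\beta^T\max_t|\tilde v_t| = \max_t \beta^{T-t}|v_t|$, while $\max_t|\tilde\Delta_t|=\max_t|\Delta_t|$ by the identity above. Collecting these recovers exactly the claimed bound. The proof is essentially a bookkeeping exercise once the reduction to scale-free FTRL on the rescaled losses is recognized; the only mild subtlety is matching the powers of $\beta$ correctly when moving the discounting back inside the square root and the maximum, which I expect to be routine rather than a genuine obstacle.
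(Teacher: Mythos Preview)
Your proposal is correct and takes essentially the same approach as the paper: both recognize that \ref{dftrl} on $v_{1:T}$ is exactly scale-free FTRL on the rescaled sequence $\beta^{-t}v_t$, apply \autoref{thm:adagrad-ftrl} to the latter, and multiply the resulting inequality by $\beta^T$. Your bookkeeping of how $\beta^T$ distributes through the square-root and maximum terms is accurate.
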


\paragraph{Step 2: Discounted-to-dynamic conversion.} The remaining task is to convert this discounted regret bound to a dynamic regret bound. We accomplish this via a general discounted-to-dynamic conversion, which is of independent interest. The idea is to partition the entire time horizon into subintervals, and then consider static comparators on each of them. 
To be precise, we consider a \emph{partition} of $[1,T]$ denoted by $\bigcup_{i=1}^N[a_i,b_i]$, where $a_1=1$, $b_i+1= a_{i+1}$ for all $1\le i<N-1$, and $b_N=T$.
Then each partitioned interval $[a_i,b_i]$ is coupled with an arbitrary fixed comparator $\bar u_i$. 

We remark that this conversion is independent of the algorithm. For an algorithm $\A$, $\regret^\A_{T;\beta}(u)$ and $\regret^\A_{T}(u_{0:T-1})$ denote its $\beta$-discounted regret (\autoref{def:discounted_regret}) and its dynamic regret \eqref{exp:dynamic_regret}, respectively. See \autoref{pf:thm:discount_to_dynamic} for the proof.

\begin{theorem}[\textbf{Discounted-to-dynamic conversion}] \label{thm:discount_to_dynamic}
Consider an arbitrary 1D OLO algorithm $\A$. For all $T>0$, loss sequence $v_{1:T}$ and comparator sequence $u_{0:T-1}$, the dynamic regret of $\A$ satisfies
\begin{align*}
\regret^\A_{T}(u_{0:T-1})&= 
\beta \regret^\A_{T;\beta}(\bar u_N) +(1-\beta)\sum_{i=1}^N\sum_{t\in [a_i,b_i]}  \regret^\A_{t;\beta}(\bar u_i)\\
&\quad +\beta \sum_{i=1}^{N-1} \spar{\rpar{\sum_{t=1}^{b_i} \beta^{b_i-t} v_{t}}(\bar u_{i+1} - \bar u_{i})} +\sum_{i=1}^N\sum_{t\in [a_i,b_i]} v_{t}(\bar{u}_i-u_{t-1})\,,
\end{align*}
where $\bigcup_{i=1}^N[a_i,b_i]$ is an arbitrary partition of of $[1,T]$, and $\bar u_1,\ldots \bar u_N\in\R$ are also arbitrary.
\end{theorem}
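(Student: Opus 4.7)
The plan is to verify the equality directly by algebraic manipulation, starting from the dynamic regret and reshaping it using a single key recursion for the discounted regret. First, I split the dynamic regret across the partition and swap out the time-varying comparator $u_{t-1}$ for the piecewise-constant $\bar u_i$ on each block:
\begin{align*}
\regret^\A_T(u_{0:T-1}) = \sum_{i=1}^N\sum_{t\in[a_i,b_i]} v_t(\Delta_{t-1}-\bar u_i) \;+\; \sum_{i=1}^N\sum_{t\in[a_i,b_i]} v_t(\bar u_i-u_{t-1}).
\end{align*}
The second sum is exactly the last comparator-swap term in the theorem, so the task reduces to showing that the first sum equals the remaining three terms on the RHS.

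Next, I would exploit the one-step recursion of the discounted regret. Writing $L_t(u):=v_t(\Delta_{t-1}-u)$, the definition of $\regret^\A_{t;\beta}(u)$ immediately gives $L_t(u)=\regret^\A_{t;\beta}(u)-\beta\,\regret^\A_{t-1;\beta}(u)$ (with the convention $\regret^\A_{0;\beta}(u)=0$). Substituting this into the inner sum on one block $[a_i,b_i]$ with a single comparator $\bar u_i$ and rearranging the two shifted sums gives
\begin{align*}
\sum_{t\in[a_i,b_i]} L_t(\bar u_i) = (1-\beta)\sum_{t\in[a_i,b_i]}\regret^\A_{t;\beta}(\bar u_i) \;+\;\beta\,\regret^\A_{b_i;\beta}(\bar u_i) \;-\;\beta\,\regret^\A_{a_i-1;\beta}(\bar u_i).
\end{align*}
Summing over $i$ produces the desired $(1-\beta)$-factor double sum, while leaving the boundary contribution $\beta\sum_{i=1}^N\bigl[\regret^\A_{b_i;\beta}(\bar u_i)-\regret^\A_{a_i-1;\beta}(\bar u_i)\bigr]$ to be handled.

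For the boundary, I would use $a_{i+1}-1=b_i$ together with $b_N=T$ and $a_1-1=0$ to telescope. The surviving $\regret^\A_{b_N;\beta}(\bar u_N)=\regret^\A_{T;\beta}(\bar u_N)$ yields the first term on the RHS, and the cross-interval residues collapse to $\sum_{i=1}^{N-1}[\regret^\A_{b_i;\beta}(\bar u_i)-\regret^\A_{b_i;\beta}(\bar u_{i+1})]$. Because $\regret^\A_{t;\beta}(u)$ is affine in $u$ with slope $-\sum_{s=1}^{t}\beta^{t-s}v_s$, each such difference is exactly $\bigl(\sum_{t=1}^{b_i}\beta^{b_i-t}v_t\bigr)(\bar u_{i+1}-\bar u_i)$, matching the third term in the statement.

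The result is then assembled by combining these pieces; there is no genuine estimation, only identities, so the proof is an equality throughout. The only place where one must be careful is the bookkeeping in the telescoping step, in particular treating the first block (where $a_1-1=0$ gives a vanishing term) and the last block (where the surviving $\regret^\A_{b_N;\beta}(\bar u_N)$ must be isolated); matching signs in the linear-in-$u$ contribution of consecutive blocks is the one spot where it is easy to miscount, and that is the only real obstacle.
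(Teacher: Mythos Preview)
Your proposal is correct and follows essentially the same approach as the paper: split the dynamic regret by swapping in the piecewise-constant comparators, use the one-step recursion $v_t(\Delta_{t-1}-u)=\regret^\A_{t;\beta}(u)-\beta\,\regret^\A_{t-1;\beta}(u)$ on each block, and then telescope the boundary terms using $a_{i+1}-1=b_i$ together with the affinity of $\regret^\A_{t;\beta}(\cdot)$ in $u$. The paper packages the per-block identity as a separate lemma, but the argument is otherwise the same.
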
  

The remarkable aspect of this result is that it is an \emph{equality}. That is, we do not lose anything through the conversion. 
Given a discounted regret bound of $\A$, we can make use of this conversion by substituting $\regret^\A_{T;\beta}(\bar u_N)$ and $\regret^\A_{t;\beta}(\bar u_i)$ with their discounted regret bounds, and then taking the infimum on the RHS \emph{w.r.t.} the partition $\cup_{i \in [N]} [a_i,b_i]$ and the choice of the ``approximated comparator sequence'' $\bar u_1,\dotsc, \bar u_N$.

\paragraph{Step 3: Plugging in \ref{dftrl}.} Now we set $\A$ in the conversion to \ref{dftrl}. See \autoref{pf:dftrl_formal} for the proof.

\begin{thmbox}[{\bf Dynamic regret of \ref{dftrl}}] \label{thm:dftrl_formal}
Consider \ref{dftrl} with a fixed $\alpha>0$.
Consider any loss sequence $v_{1:T}$  and any comparator sequence $u_{0:T-1}$ s.t. $|u_t|\leq U$.
The dynamic regret \eqref{exp:dynamic_regret} of the \ref{dftrl} is bounded as
\begin{align}
\regret_{T}(u_{0:T-1}) &\leq \rpar{\frac{U^2}{2\alpha}+\sqrt{2}\alpha}\spar{\beta   \sqrt{V_\beta(v_{1:T})}   + (1-\beta) \sum_{t=1}^T  \sqrt{V_{\beta}(v_{1:t})}}\\
&\quad +2\beta \left (\max_{t\in [1,T]} \lvert \Delta_t \rvert \cdot \max_{t\in [1,T]} \lvert  \beta^{T-t} v_t\rvert\right ) +2(1-\beta) \sum_{t=1}^T  \left (\max_{s\in [1,t]} \lvert \Delta_s\rvert \cdot \max_{s\in [1,t]} \lvert   \beta^{t-s}v_s\rvert\right )\\
&\quad + \variation\,,
\end{align}
where  $V_\beta(v_{1:t})\coloneqq \sum_{s=1}^{t} (\beta^{t-s}v_s)^2$ is the \emph{discounted variance of the losses} and 
\begin{align}
\variation \coloneqq\inf \left\{ \beta \sum_{i=1}^{N-1}
\rpar{\sum_{t=1}^{b_i} \beta^{b_i-t} v_{t}} \rpar{\bar u_{i+1} - \bar u_{i}} +  \sum_{i=1}^N\sum_{t\in [a_i,b_i]} v_{t}(\bar{u}_i-u_{t-1})\right\}\,,
\end{align}
and the infimum in $\variation$ is taken over
all partitions $\bigcup_{i=1}^N [a_i,b_i]$ of $[1,T]$ and all choices of $\{\bar u_i\}_{i\in N}$ satisfying $|\bar u_i|\leq U$.
\end{thmbox}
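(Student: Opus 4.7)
The plan is to execute the three-step recipe already outlined in the excerpt: (i) invoke the discounted-to-dynamic conversion of \autoref{thm:discount_to_dynamic} to rewrite the dynamic regret as an equality, (ii) substitute the discounted regret bound of \autoref{thm:dftrl_discount_regret} into each discounted-regret term on the right-hand side, and (iii) identify the residual comparator-dependent pieces with $\variation$ and take the infimum over admissible partitions and approximating comparators.

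First, I would apply \autoref{thm:discount_to_dynamic} to \ref{dftrl} with an \emph{arbitrary} partition $\bigcup_{i=1}^N [a_i,b_i]$ of $[1,T]$ and an arbitrary sequence of ``approximate'' comparators $\bar u_1,\ldots,\bar u_N$, each satisfying $|\bar u_i|\le U$. This produces an equality of the form
\begin{align*}
\regret_T(u_{0:T-1}) &= \beta\regret_{T;\beta}(\bar u_N) + (1-\beta)\sum_{i=1}^N\sum_{t\in[a_i,b_i]}\regret_{t;\beta}(\bar u_i) \\
&\quad + \beta\sum_{i=1}^{N-1}\Bigl(\sum_{t=1}^{b_i}\beta^{b_i-t}v_t\Bigr)(\bar u_{i+1}-\bar u_i) + \sum_{i=1}^N\sum_{t\in[a_i,b_i]} v_t(\bar u_i - u_{t-1}).
\end{align*}
The last two summands are exactly the quantity appearing inside the infimum of $\variation$, so the only remaining work is to upper-bound the two discounted-regret expressions on the first line.

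Second, I would plug in \autoref{thm:dftrl_discount_regret} with the comparator $\bar u_N$ (for the first term) and with each $\bar u_i$ where $t\in[a_i,b_i]$ (for the second term). Since $|\bar u_i|\le U$, the prefactor $\tfrac{\bar u_i^2}{2\alpha}+\sqrt 2\alpha$ is uniformly bounded by $\tfrac{U^2}{2\alpha}+\sqrt 2\alpha$ and can be pulled out of both sums. The ``variance'' term contributes $\beta\sqrt{V_\beta(v_{1:T})}$ from the first piece and $(1-\beta)\sum_{t=1}^T\sqrt{V_\beta(v_{1:t})}$ from the double-sum (after observing $\sum_{i=1}^N\sum_{t\in[a_i,b_i]}(\cdot)=\sum_{t=1}^T(\cdot)$), while the $2\max_t|\Delta_t|\cdot\max_t|\beta^{T-t}v_t|$ tail term contributes, analogously, the $\beta(\cdot)$ and $(1-\beta)\sum_t(\cdot)$ pieces in the stated bound.

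Third, combining these two substitutions with the untouched comparator-dependent summands yields the claimed inequality for every choice of partition and $\{\bar u_i\}$ with $|\bar u_i|\le U$. Taking the infimum over such choices on both sides (the left-hand side does not depend on them) produces the $\variation$ term as written, completing the proof. The main subtlety, rather than an obstacle, is the careful bookkeeping when re-indexing $\sum_{i=1}^N\sum_{t\in[a_i,b_i]}$ as $\sum_{t=1}^T$ and recognizing that the comparator $\bar u_i$ used inside $\regret_{t;\beta}(\bar u_i)$ varies with the partition block containing $t$; the uniform bound $|\bar u_i|\le U$ is what allows us to factor out a single prefactor $\tfrac{U^2}{2\alpha}+\sqrt 2\alpha$ and thereby decouple the variance term from the specific choice of partition.
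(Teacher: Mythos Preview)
Your proposal is correct and follows essentially the same approach as the paper: apply \autoref{thm:discount_to_dynamic} for an arbitrary admissible partition and $\{\bar u_i\}$ with $|\bar u_i|\le U$, bound each discounted-regret term via \autoref{thm:dftrl_discount_regret} using the uniform prefactor $\tfrac{U^2}{2\alpha}+\sqrt 2\alpha$, collapse $\sum_{i}\sum_{t\in[a_i,b_i]}$ into $\sum_{t=1}^T$, and then take the infimum over the remaining comparator-dependent terms. The bookkeeping subtlety you flag---that the partition-independent upper bounds on the discounted regrets are what make the infimum affect only the $\variation$ piece---is exactly the point.
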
 

In \autoref{thm:dftrl_formal}, the variation term $\variation$ consists of two terms.
The first part
\begin{align}
\beta \sum_{i=1}^{N-1}
\rpar{\sum_{t=1}^{b_i} \beta^{b_i-t} v_{t}} \rpar{\bar u_{i+1} - \bar u_{i}}
\end{align}
measures how fast the representative comparators $\bar u_i$'s change across different subintervals, and we hence call it the ``inter-partition variation". The second term
\[
\sum_{i=1}^N\sum_{t\in [a_i,b_i]} v_{t}(\bar{u}_i-u_{t-1})
\]
measures how different $u_t$'s are from the representative comparators $\bar u_i$'s within each subinterval, and we call it the  ``intra-partition variation". A notable strength of the variation term is that it is the infimum over all partitions and $\bar u_i$'s. In other words, the upper bound will \textbf{automatically adapt} to the best choice of partitions and $\bar u_i$'s without knowing them explicitly. For instance, choosing
\begin{align}
\bar u_i = \frac{\sum_{t\in[a_i,b_i]}v_{t}u_{t-1}}{\sum_{t\in[a_i,b_i]}v_{t}}  
\end{align}
would make the intra-partition variation term zero.

Referring to \autoref{thm:dftrl_formal}, we immediately obtain its simplified version in the main text, \autoref{thm:dftrl} with the proof in \autoref{sec:simplify}.
For the case of bounded comparators (\ref{dftrl-clip}), the dynamic regret can be analyzed using almost the same strategy, which leads us to state \autoref{thm:dftrl-clip} with the proof in \autoref{sec:dftrl-clip}.

\subsection{Proof of \autoref{thm:dftrl_discount_regret}}\label{proof:dftrl_discount_regret}

\ref{dftrl} is equivalent to scale-free FTRL with $v_t\leftarrow\beta^{-t}v_t$. Therefore, applying \autoref{thm:adagrad-ftrl} with $v_t \leftarrow \beta^{-t}v_t$ leads to
\[
\sum_{t=1}^{T}\beta^{-t}v_{t}(\Delta_{t-1}-u)\leq \rpar{\frac{u^2}{2\alpha}+\sqrt{2}\alpha}\sqrt{\sum_{t=1}^{T}\rpar{\beta^{-t}v_t}^2}+2 \rpar{\max_{t\in[1,T]}\abs{\Delta_t}}\rpar{\max_{t\in[1,T]}\abs{\beta^{-t}v_t}}\,.
\]
Multiplying both sides by $\beta^T$ completes the proof.

\subsection{Proof of \autoref{thm:discount_to_dynamic}}
\label{pf:thm:discount_to_dynamic}

Overall, the proof draws inspiration from \citep{zhang2018dynamic}, where a similar partitioning argument was used to prove a dynamic regret guarantee of a \emph{strongly adaptive} online learner \citep{daniely2015strongly}. Throughout the proof, we will omit the superscript $\A$ for brevity, since our argument is independent of specific algorithms. 

We start with a simple fact that connects the dynamic regret to the subinterval static regret.
For any partition $\bigcup_{i=1}^N[a_i,b_i]$ of $[1,T]$ and any choices of $\{\bar u_i\}_{i\in N}$, 
\begin{align}\label{ineq:1}
\regret_{T}(u_{0:T-1})  = \sum_{i=1}^N \sum_{t=a_i}^{b_i}v_{t}(\Delta_{t-1}-\bar u_i) +  \sum_{i=1}^N\sum_{t=a_i}^{b_i} v_{t}(\bar{u}_i-u_{t-1})\,.
\end{align}

To handle the static regret on the RHS,  we use the following result.
\begin{lemma}  \label{lem:to_static}
On any subinterval $[a,b]\subset [1,T]$, with any $u\in\R$,
\begin{align}
\sum_{t=a}^{b}v_{t}(\Delta_{t-1}-u)  = (1-\beta)\sum_{t=a}^b \regret_{t;\beta}(u) + \beta \rpar{\regret_{b;\beta}(u) -  \regret_{a-1;\beta}(u)}\,.
\end{align}
\end{lemma}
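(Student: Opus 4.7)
The plan is to exploit the one-step recursion satisfied by the $\beta$-discounted regret. Writing $f_t \coloneqq v_t(\Delta_{t-1}-u)$, the defining sum $\regret_{t;\beta}(u) = \sum_{s=1}^t \beta^{t-s} f_s$ immediately yields
$$\regret_{t;\beta}(u) \;=\; \beta\,\regret_{t-1;\beta}(u) + f_t,$$
with the convention $\regret_{0;\beta}(u)=0$. Rearranging, each per-round summand on the left-hand side of the lemma is a telescoping-style difference, $f_t = \regret_{t;\beta}(u) - \beta\,\regret_{t-1;\beta}(u)$, so the whole proof reduces to an Abel summation.

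Concretely, I would sum this identity over $t\in[a,b]$ and re-index the second occurrence to obtain
$$\sum_{t=a}^b f_t \;=\; \sum_{t=a}^b \regret_{t;\beta}(u) \;-\; \beta\sum_{t=a-1}^{b-1}\regret_{t;\beta}(u).$$
Then I peel off the boundary terms $\regret_{b;\beta}(u)$ from the first sum and $-\beta\,\regret_{a-1;\beta}(u)$ from the second, leaving the overlapping range $[a,b-1]$ on which the two sums combine with coefficient $(1-\beta)$. Adding and subtracting $(1-\beta)\regret_{b;\beta}(u)$ to extend that $(1-\beta)$-weighted sum to the full range $[a,b]$ converts the leftover $\regret_{b;\beta}(u)$ into $\beta\,\regret_{b;\beta}(u)$, producing the claimed decomposition $(1-\beta)\sum_{t=a}^b \regret_{t;\beta}(u) + \beta\,(\regret_{b;\beta}(u) - \regret_{a-1;\beta}(u))$.

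There is really no obstacle here: the argument is a single Abel-summation identity whose correctness rests on two easy checks, namely the index shift $\sum_{t=a}^b \regret_{t-1;\beta}(u) = \sum_{t=a-1}^{b-1}\regret_{t;\beta}(u)$ and the base case $\regret_{0;\beta}(u)=0$ used when $a=1$. The key conceptual point worth emphasizing in the write-up is that the derivation holds as an \emph{equality} throughout, which is precisely what is needed so that \autoref{thm:discount_to_dynamic} can assemble these per-subinterval identities into the equality-form discounted-to-dynamic conversion without incurring any slack.
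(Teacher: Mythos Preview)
Your proposal is correct and essentially identical to the paper's proof: both derive the one-step recursion $\regret_{t;\beta}(u)-\beta\,\regret_{t-1;\beta}(u)=v_t(\Delta_{t-1}-u)$ and then sum over $t\in[a,b]$ with an index shift to isolate the $(1-\beta)$-weighted sum and the two boundary terms. The only cosmetic difference is that the paper collapses your last two algebraic steps into a single line.
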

\begin{proof}
For all $t$, notice that
\begin{align}
\regret_{t;\beta}(u)= \sum_{s=1}^{t} \beta^{t-s}v_{s}(\Delta_{s-1} -u) \quad \text{and} \quad
\regret_{t-1;\beta}(u)=\sum_{s=1}^{t-1} \beta^{t-s}v_{s}(\Delta_{s-1} -u)\,,
\end{align}
and thus
\begin{align*}
\regret_{t;\beta}(u) - \beta\regret_{t-1;\beta}(u)  = v_{t}(\Delta_{t-1}-u)\,.
\end{align*}
Summing over $t\in[a,b]$,
\begin{align*}
\sum_{t=a}^{b}v_{t}(\Delta_{t-1}-u)&= \sum_{t=a}^b\regret_{t;\beta}(u)-\beta\sum_{t=a}^b\regret_{t-1;\beta}(u)\\
&=(1-\beta)\sum_{t=a}^b\regret_{t;\beta}(u)-\beta\regret_{a-1;\beta}(u)+\beta\regret_{b;\beta}(u)\,.\qedhere
\end{align*}
\end{proof}

Next, applying \autoref{lem:to_static} to each $[a_i,b_i]$ in \eqref{ineq:1} yields:
\begin{align}
\regret_{T}(u_{0:T-1}) = (1-\beta)\sum_{i=1}^N\sum_{t\in[a_i,b_i]} \regret_{t;\beta}(\bar u_i)+\beta \sum_{i=1}^N \spar{\regret_{b_i;\beta}(\bar{u}_i) -  \regret_{a_i-1;\beta}(\bar{u}_i)} 
+ \sum_{i=1}^N\sum_{t\in[a_i,b_i]} v_{t}(\bar{u}_i-u_{t-1})\,.
\end{align}
Since $a_{i}-1 = b_{i-1}$, the second term on the RHS can be rewritten as
\begin{align}
\sum_{i=1}^N \spar{\regret_{b_i;\beta}(\bar{u}_i) -  \regret_{a_i-1;\beta}(\bar{u}_i)} & =  \regret_{T;\beta}(\bar u_N) + \sum_{i=1}^{N-1} \spar{\regret_{b_i;\beta}(\bar u_{i}) -  \regret_{b_i;\beta}(\bar u_{i+1})}\\
&=  \regret_{T;\beta}(\bar u_N) + \sum_{i=1}^{N-1} \spar{\rpar{\sum_{t=1}^{b_i} \beta^{b_i-t} v_{t}}(\bar u_{i+1} - \bar u_{i})} \,.
\end{align} 
Combining everything above completes the proof.

\subsection{Proof of \autoref{thm:dftrl_formal}}\label{pf:dftrl_formal}

Due to \autoref{thm:discount_to_dynamic}, for any ``approximated comparator sequence'' $\bar u_1,\ldots,\bar u_N\in\R$ with $\abs{\bar u_i} \leq U$ for $i\in [N]$, we have 
\begin{align*}
\regret^\A_{T}(u_{0:T-1})&= 
\beta \regret^\A_{T;\beta}(\bar u_N) +(1-\beta)\sum_{i=1}^N\sum_{t\in [a_i,b_i]}  \regret^\A_{t;\beta}(\bar u_i) \\
&\quad +\beta \sum_{i=1}^{N-1} \spar{\rpar{\sum_{t=1}^{b_i} \beta^{b_i-t} v_{t}}(\bar u_{i+1} - \bar u_{i})} +\sum_{i=1}^N\sum_{t\in [a_i,b_i]} v_{t}(\bar{u}_i-u_{t-1})\,.
\end{align*}

Using \autoref{thm:dftrl_discount_regret} and $\abs{\bar u_i}\leq U$, 
\begin{align*}
\regret^\A_{T;\beta}(\bar u_N)&\leq \left (\frac{\bar u_N^2}{2\alpha}+\sqrt 2\alpha\right )\sqrt{\sum_{t=1}^{T} (\beta^{T-t}v_t)^2}+2\left (\max_{t\in [1,T]} \lvert \Delta_t\rvert\right )\left (\max_{t\in [1:T]} \lvert \beta^{T-t} v_t\rvert\right )\\
&\leq \rpar{\frac{U^2}{2\alpha}+\sqrt{2}\alpha}\sqrt{V_\beta(v_{1:T})}+2 \max_{t\in [1,T]} \lvert \Delta_t\rvert \cdot \max_{t\in [1:T]} \lvert  \beta^{T-t} v_t\rvert \,,
\end{align*}
and similarly, for any $t$ and $\bar u_i$,
\[
\regret^\A_{t;\beta}(\bar u_i)\leq \rpar{\frac{U^2}{2\alpha}+\sqrt{2}\alpha}\sqrt{V_\beta(v_{1:t})}+2 \max_{s\in [1,t]} \lvert \Delta_s\rvert \cdot \max_{s\in [1:t]} \lvert  \beta^{t-s} v_s\rvert \,.
\]
Putting these bounds into the equality and taking the infimum on the RHS (over the partition and the $\{\bar u_i\}_{i\in N}$ sequence satisfying $\abs{\bar u_i}\leq U$) complete the proof.

\subsection{Simplification for unbounded domain: \autoref{thm:dftrl}}\label{sec:simplify}

\autoref{thm:dftrl} follows as a corollary of  \autoref{thm:dftrl_formal} with  the partition $\bigcup_{t=1}^T\{t\}$,  $U=\alpha D_\beta$ and $\bar u_t = u_t$ for all $t$. 
With such choices, we have
\[
\variation  \leq  \beta \sum_{t=1}^{T-1}
\rpar{\sum_{s=1}^{t} \beta^{t-s} v_{s}} \rpar{  u_{t} -   u_{t-1}}\,.
\]
Recall that  $\cc \coloneqq \max_{t\in[1,T]} \frac{ \abs{\sum_{s=1}^t\beta^{t-s}v_s}}{ \sqrt{   \sum_{s=1}^t (\beta^{t-s}v_s)^2}}$. 
Hence, it follows that 
\[
\abs{\sum_{s=1}^{t} \beta^{t-s} v_{s}}\leq \cc \sqrt{V_\beta(v_{1:t})}\,.
\]
Therefore, the $\variation$ term in \autoref{thm:dftrl_formal} is reduced to
\[
\variation \leq  \beta \cc \sum_{t=1}^{T-1} \sqrt{V_\beta(v_{1:t})}\abs{u_{t} - u_{t+1}}\,,
\]
and thus the bound becomes (notice that $U=\alpha \cc$ and $\beta <  1$)
\begin{align}
\regret_{T}(u_{0:T-1}) &\leq \rpar{\frac{1}{2}\alpha \cc^2 +\sqrt{2}\alpha}\spar{\sqrt{V_\beta(v_{1:T})}   + (1-\beta) \sum_{t=1}^T  \sqrt{V_{\beta}(v_{1:t})}}+2 \alpha \cc G\spar{1+(1-\beta)T}\\
&\quad + \cc \sum_{t=1}^{T-1} \sqrt{V_\beta(v_{1:t})}\abs{u_{t} - u_{t-1}}\,.\label{eq:reduction_before_beta}
\end{align}
For all $t$, using $\beta < 1$
\[
V_\beta(v_{1:t})=\sum_{s=1}^{t} (\beta^{t-s}v_s)^2\leq G^2\sum_{i=0}^\infty \beta^{2i}=\frac{G^2}{1-\beta^2}<\frac{G^2}{1-\beta}\,.
\]
Therefore, 
\begin{align}
\regret_{T}(u_{0:T-1}) &\leq \rpar{\frac{1}{2}\alpha \cc^2 +\sqrt{2}\alpha}\spar{\frac{G}{\sqrt{1-\beta}}   + \sqrt{1-\beta}GT}+2\alpha \cc G\spar{1+(1-\beta)T}
+ \frac{\cc GP}{ \sqrt{1-\beta}}\\
& =\O\rpar{\rpar{\alpha+ \alpha \cc^2 +\cc P }\frac{G}{\sqrt{1-\beta}}+\rpar{\alpha+\alpha \cc^2}\sqrt{1-\beta}GT}\,.
\end{align}
Hence, we arrive at \autoref{thm:dftrl} presented in the main paper.

\subsection{Simplification for bounded domain \autoref{thm:dftrl-clip}}
\label{sec:dftrl-clip}

For the case of bounded comparators, \emph{i.e.}, $\abs{u_t}\leq D$, we consider \ref{dftrl-clip}, the $D$-clipped version of \ref{dftrl}:
\begin{align} \tag{$\beta$-FTRL$_{D}$} \label{dftrl-clip-restate}
\Delta_t = -\clip_{D}\rpar{\alpha \frac{ \sum_{s=1}^t\beta^{t-s}v_s}{ \sqrt{   \sum_{s=1}^t (\beta^{t-s}v_s)^2}}}\,,
\end{align}
where $\clip_D(x)=x\min(\frac{D}{|x |},1)$.
With \ref{dftrl-clip}, since $|\Delta_t|\leq D$ at each step, the following regret bound holds. The proof boils down to verifying that the entire proof strategy of \autoref{thm:dftrl_discount_regret} goes through even with projection. 

\begin{theorem}[Discounted regret of \ref{dftrl-clip}] \label{thm:dftrl-clip_discount_regret}
For all $T>0$, loss sequence $v_{1:T}$ and comparator $|u|\leq D$, the discounted regret bound of \ref{dftrl-clip} is
\[
\regret_{T;\beta}(u)\leq \left (\frac{u^2}{2\alpha}+\sqrt 2\alpha\right )\sqrt{\sum_{t=1}^{T} (\beta^{T-t}v_t)^2}+2D\left (\max_{t\in [1:T]} \lvert \beta^{T-t} v_t\rvert\right )\,.
\]
\end{theorem}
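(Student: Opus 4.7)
My plan is to mirror the two-step derivation of \autoref{thm:dftrl_discount_regret}: first establish a clipped analogue of \autoref{thm:adagrad-ftrl}'s scale-free FTRL static regret bound, then translate it to a $\beta$-discounted bound by rescaling the losses.

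The starting observation is an algorithmic identity: \ref{dftrl-clip} coincides exactly with clipped scale-free FTRL applied to the rescaled loss sequence $\bar{v}_t \coloneqq \beta^{-t} v_t$, since the scalar $\beta^{-t}$ factors out of both the numerator and denominator of the scale-free update before clipping is applied. Consequently, once I prove the clipped static regret bound
\[
\sum_{t=1}^T \bar{v}_t (\Delta_{t-1} - u) \leq \left(\frac{u^2}{2\alpha} + \sqrt{2}\alpha\right)\sqrt{\sum_{t=1}^T \bar{v}_t^2} + 2 D \max_{t \in [1,T]} |\bar{v}_t|
\]
for any $|u| \leq D$ and any loss sequence $\bar{v}_{1:T}$, multiplying through by $\beta^T$ and unfolding $\bar{v}_t = \beta^{-t} v_t$ immediately yields the desired $\beta$-discounted bound.

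To prove this clipped static regret, I would revisit the three-step scaffolding of \autoref{thm:adagrad-ftrl}'s proof, now with $\Delta_t$ being the minimizer of $F_t(x) = \frac{1}{2\eta_t}x^2 + \sum_{s\leq t} v_s x$ over $[-D,D]$ (equivalently, the clip to $[-D,D]$ of the unconstrained minimizer). The FTRL telescoping in the middle step carries over essentially unchanged for any comparator $u \in [-D,D]$: it only relies on $F_T(\Delta_T) \leq F_T(u)$, which holds because $\Delta_T$ is a constrained minimizer over a set containing $u$. The final Cutkosky-style gradient-clipping step simplifies compared to the unclipped case: since $|\Delta_t| \leq D$ is enforced by projection, the telescoping error $\sum_t |v_t - \tilde{v}_t|(|\Delta_{t-1}| + |\Delta_t|)$ is upper bounded by $2 D \max_t |v_t|$, directly replacing the $2(\max_t |\Delta_t|)(\max_t |v_t|)$ term of the unconstrained bound.

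The main subtlety, and the one I expect to be the key technical obstacle, is the one-step-ahead stability inequality
\[
F_t(\Delta_t) + \tilde{v}_{t+1} \Delta_t - F_t(\Delta_{t+1}) - \tilde{v}_{t+1} \Delta_{t+1} \leq \frac{\eta_t}{2} \tilde{v}_{t+1}^2
\]
in the constrained setting, since the unconstrained proof relied on $F_t'(\Delta_t) = 0$, which fails whenever the clip is active at $\Delta_t$. My plan is to bypass this by expanding $F_t(\Delta_t) - F_t(\Delta_{t+1})$ via the exact quadratic identity and invoking first-order optimality of the constrained minimizer $\Delta_t$ to conclude $F_t'(\Delta_t)(\Delta_{t+1} - \Delta_t) \geq 0$ (valid since $\Delta_{t+1} \in [-D,D]$), which yields $F_t(\Delta_t) - F_t(\Delta_{t+1}) \leq -\frac{1}{2\eta_t}(\Delta_{t+1} - \Delta_t)^2$. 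Plugging this into the LHS above leaves a quadratic in $\Delta_t - \Delta_{t+1}$ whose maximum over $\mathbb{R}$ is precisely $\frac{\eta_t}{2}\tilde{v}_{t+1}^2$, completing the adaptation.
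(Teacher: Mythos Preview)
Your proposal is correct and matches the paper's (implicit) approach: the paper merely states that ``the proof boils down to verifying that the entire proof strategy of \autoref{thm:dftrl_discount_regret} goes through even with projection,'' and you have filled in precisely those details. In particular, your handling of the one-step stability inequality via first-order optimality of the constrained minimizer (yielding $F_t(\Delta_t)-F_t(\Delta_{t+1})\le -\frac{1}{2\eta_t}(\Delta_{t+1}-\Delta_t)^2$ and then maximizing the resulting quadratic) is exactly the standard adaptation needed, and the replacement of $\max_t|\Delta_t|$ by $D$ in the Cutkosky-style error term is the only other change.
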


Now based on this discounted regret bound, we prove the claimed dynamic regret bound in \autoref{thm:dftrl-clip}. 
Similar to the proof of \autoref{thm:dftrl},
from  \autoref{thm:dftrl_formal}, we choose the partition to be $\cup_{t=1}^T\{t\}$, and let $U=D$ and $\bar u_t = u_t$ for all $t$. With such choices, we have
\begin{align}
\variation  \leq  \beta \sum_{t=1}^{T-1}
\abs{\sum_{s=1}^{t} \beta^{t-s} v_{s}} \abs{  u_{t} -   u_{t-1}}   \,.
\end{align}
Therefore, the upper bound in \autoref{thm:dftrl_formal} becomes   (notice that $\alpha=U=D$ and $\beta<1$)
\begin{align}
\regret_{T}(u_{0:T-1}) &\leq 2D\spar{\sqrt{V_\beta(v_{1:T})}   + (1-\beta) \sum_{t=1}^T  \sqrt{V_{\beta}(v_{1:t})}}+2DG\spar{1+(1-\beta)T} +\beta \sum_{t=1}^{T-1}
\abs{\sum_{s=1}^{t} \beta^{t-s} v_{s}} \abs{  u_{t} -   u_{t-1}} \\
&\leq  4D\spar{\sqrt{V_\beta(v_{1:T})}   + (1-\beta) \sum_{t=1}^T  \sqrt{V_{\beta}(v_{1:t})}}+   \sum_{t=1}^{T-1}
\abs{\sum_{s=1}^{t} \beta^{t-s} v_{s}} \abs{  u_{t} -   u_{t-1}} \,.
\end{align}
Now notice that for all $t$, since $\beta<1$,
\begin{align}
V_\beta(v_{1:t})&=\sum_{s=1}^{t} (\beta^{t-s}v_s)^2\leq G^2\sum_{i=0}^\infty \beta^{2i}=\frac{G^2}{1-\beta^2}<\frac{G^2}{1-\beta}\,, \quad \text{and}\\
\abs{\sum_{s=1}^{t} \beta^{t-s} v_{s}} &\leq G \sum_{t=0}^\infty \beta^t \leq \frac{G}{1-\beta}\,.
\end{align}
Substituting these bounds back to the bound on $R_T(u_{0:T-1})$, we obtain
\[
\regret_{T}(u_{0:T-1}) \leq  4DG\rpar{ \frac{1}{\sqrt{1-\beta}}  + \sqrt{1-\beta} \cdot T} +  \frac{GP}{1-\beta}\,.
\]
Therefore, we arrive at \autoref{thm:dftrl-clip} presented in the main paper.

\section{Benefits of momentum and discounting factor}

This section presents omitted details from \autoref{sec:components}. The goal is to justify the benefits of Adam's algorithmic components, namely the momentum and the discounting factor.

\subsection{Proof of lower bounds (\autoref{thm:lower})} \label{pf:lower}

For simplicity, we start by assuming $T$ is a multiple of $4$.
 Consider the following loss sequence:
 For $1\leq t\leq T/2$,
 \begin{align}
     \bv_t = \begin{cases}
         (1,0) &\text{for $t$ even,}\\
         (0,1) &\text{for $t$ odd.}
     \end{cases}
 \end{align}
For $ T/2< t \leq T$,
  \begin{align}
     \bv_t = \begin{cases}
         (-1,0) &\text{for $t$ even,}\\
         (0,-1) &\text{for $t$ odd.}
     \end{cases}
 \end{align}
The comparator sequence $\bu_{0:T-1}$ is given as $\bu_t = (-1,-1)$ for $0\leq t\leq T/2-1$ and $\bu_t = (1,1)$ for $t\geq T/2$.
Then, we have  $\sum_{t=1}^{T} \inp{\bv_{t}}{\bu_{t-1}} = -T$.

As for the total loss, 
\begin{itemize}
\item Consider the baseline \eqref{no-momentum}. Since $\bv_t[i]\bv_{t+1}[i] =0$ for all $t\geq 1$ and $i=1,2$, we have
\begin{align}
\sum_{t=1}^{T} \inp{\bv_{t}}{\update_{t-1}}&=\sum_{t=1}^T\sum_{i=1}^2\bv_t[i]\update_{t-1}[i]=-\sum_{t=1}^T\sum_{i=1}^2\balpha_{t-1}[i]\bv_{t-1}[i]\bv_t[i]=0.
\end{align}
\item Consider \ref{dftrl-clip} with $\beta=1$ and $D=1$. Recall its coordinate-wise update rule,
\begin{equation} 
\update_t[i] = -\clip_1\rpar{\alpha \frac{\sum_{s=1}^{t} \bv_{s}[i]}{ \sqrt{ \sum_{s=1}^t \bv_s[i]^2}}}\,.
\end{equation}
From the loss sequence, it follows that $\sum_{s=1}^{t} \bv_{s}[i]\geq 0$ for all $t$, and hence, we have $-1\leq \update_t[i]\leq 0$ for all $t\geq 0$ and $i=1,2$. Hence, $\sum_{t=1}^{T} \inp{\bv_{t}}{\update_{t-1}} \geq -T/2$.
\end{itemize}
This completes the proof under the assumption that $T$ is a multiple of $4$.

For general $T$, let $\hat T$ be the largest integer less or equal to $T$ which is a multiple of $4$. Then, we define $\bv_{1:\hat T}$ and $\bu_{1:\hat T-1}$ as the aforementioned loss and comparator sequences (with $T$ replaced by $\hat T$), and this yields lower bounds on $\sum_{t=1}^{\hat T} \inp{\bv_{t}}{\update_{t-1}-\bu_{t-1}}$. As for the time index satisfying $\hat T<t\leq T$, we define $\bv_t=(0,0)$ and $\bu_{t-1}=\bu_{\hat T-1}$. In this way, altogether, $\sum_{t=1}^{T} \inp{\bv_{t}}{\update_{t-1}-\bu_{t-1}}=\sum_{t=1}^{\hat T} \inp{\bv_{t}}{\update_{t-1}-\bu_{t-1}}$, and the lower bounds for the latter can be applied.

\subsection{Proof of \autoref{cor:discount_bound}}
\label{pf:cor:discount}

Using \autoref{thm:dftrl-clip} with $\beta = 1 - c T^{-\nicefrac{2}{3}}$,
\begin{align*}
R_T(u_{0:T-1})
&\lesssim \frac{DG}{\sqrt{1-\beta}} + \frac{DG}{1-\beta} + \sqrt{1-\beta} DGT \tag{\autoref{thm:dftrl-clip}}\\
&= \frac{DGT^{1/3}}{\sqrt c} + \frac{PGT^{2/3}}{c} + \sqrt{c} DGT^{2/3}\\
&\lesssim DGT^{2/3}c^{1/2} \Bigl(1 + \frac{c^{-3/2}P}D\Bigr)\,.
\end{align*}
With the optimal tuning $c = \Theta\bigl((P/D)^{\nicefrac{2}{3}}\bigr)$, it becomes $\mathcal{O}(GD^{\nicefrac{2}{3}}P^{\nicefrac{1}{3}}T^{\nicefrac{2}{3}})$.

\subsection{Proof of \autoref{cor:discount_unbound}}\label{pf:cor:discount_unbound}

Consider $\beta<1$ first. Since the environment is well-behaved with constant $M$, we can invoke \autoref{thm:dftrl} with $\cc$ there replaced by $M$. Notice that $M$ is independent of $\beta$, therefore at the end we may tune $\beta$ using $M$. Concretely, using \autoref{thm:dftrl} with $\beta=1-cT^{-1}$,
\begin{align}
R_T(u_{0:T-1})&\lesssim\frac{\rpar{ \alpha M^2+M P }G}{\sqrt{1-\beta}}+\sqrt{1-\beta}\alpha M^2GT\tag{\autoref{thm:dftrl}}\\
&=M G\sqrt{T}c^{\nicefrac 1 2}\rpar{\frac{\alpha M+P}{c}+\alpha M}\\
&\lesssim \alpha M^2 G\sqrt{T}c^{\nicefrac 1 2}\rpar{1+\frac{c^{-1}P}{\alpha M}}\,.
\end{align}
With the optimal tuning $c = \Theta\bigl(P/(\alpha M)\bigr)$, it becomes $\mathcal{O}(\alpha^{\nicefrac{1}{2}}M^{\nicefrac{3}{2}}GP^{\nicefrac{1}{2}}T^{\nicefrac{1}{2}})$.

Next, consider $\beta=1$. We follow the same analysis in \autoref{sec:simplify} until \eqref{eq:reduction_before_beta}, before plugging in any $\beta$. Then, instead of using $\beta<1$ there, we plug in $\beta=1$, which yields
\begin{align}
\regret_{[0,T-1]}(u_{0:T-1})& \leq \rpar{\frac{1}{2}\alpha M^2 +\sqrt{2}\alpha}\sqrt{V_1(v_{1:T})}+2\alpha M G + M \sum_{t=0}^{T-2} \sqrt{V_1(v_{1:t+1})}\abs{u_{t} - u_{t+1}}\\
& \lesssim \rpar{\alpha M^2 +\sqrt{2}\alpha}  G\sqrt{T} +  M  G\sqrt{T}  \sum_{t=0}^{T-2}\abs{u_{t} - u_{t+1}}\\
&\lesssim M GP\sqrt{T}\,.\tag{$T\gg 1$, and $P\gg \alpha M$}
\end{align}

\section{Details on optimization} \label{sec:proof}

\subsection{Proof of \autoref{thm:guarantee}}
\label{pf:thm:guarantee}
Since $F$ is differentiable, the fundamental theorem of calculus implies that 
for all  $\bx,\by\in \R^d$,  
$F(\by)-F(\bx) 
=  \int_0^1 \! \langle \nabla F(\bx+t(\by-\bx)), \by-\bx\rangle\, \mathrm{d}t$. 
Hence, we have 
\begin{align}
F(\bw_{t+1})-F(\bw_t) &= \int_0^1 \! \langle \nabla F(\bw_{t}+s \update_t),  \update_t\rangle\, \mathrm{d}s =\E_{s\sim \textup{Unif}([0,1]) } \inp{ \nabla F(\bw_{t}+s \update_t)}{\update_t}
\end{align}
Now, summing over $t$ and telescoping yield the desired equality.

\end{document}